\def\eqref#1{equation~\ref{#1}}
\def\ceil#1{\lceil #1 \rceil}
\def\1{\bm{1}}
\def\vc{{\bm{c}}}
\def\vd{{\bm{d}}}
\def\ve{{\bm{e}}}
\def\vr{{\bm{r}}}
\def\vepsilon{{\bm{\epsilon}}}
\DeclareMathAlphabet{\mathsfit}{\encodingdefault}{\sfdefault}{m}{sl}
\SetMathAlphabet{\mathsfit}{bold}{\encodingdefault}{\sfdefault}{bx}{n}
\def\gD{{\mathcal{D}}}
\def\gL{{\mathcal{L}}}
\def\gN{{\mathcal{N}}}
\def\gX{{\mathcal{X}}}
\def\gY{{\mathcal{Y}}}
\newcommand{\E}{\mathbb{E}}
\newcommand{\R}{\mathbb{R}}
\newtheorem{definition}{Definition}
\newtheorem{lemma}{Lemma}
\newtheorem{proposition}{Proposition}
\title{\textit{Connect, Collapse, Corrupt:} Learning \\ Cross-Modal Tasks with Uni-Modal Data}
\author{Yuhui Zhang\thanks{Equal contribution. Project page available at \url{https://yuhui-zh15.github.io/C3-Website/}.}~~, Elaine Sui$^*$, Serena Yeung-Levy \\
Stanford University\\
\texttt{\{yuhuiz, esui, syyeung\}@cs.stanford.edu} 
}
\begin{document}

\maketitle

\begin{abstract}
Building cross-modal applications is challenging due to limited paired multi-modal data. Recent works have shown that leveraging a pre-trained multi-modal contrastive representation space enables cross-modal tasks to be learned from uni-modal data. This is based on the assumption that contrastive optimization makes embeddings from different modalities interchangeable. However, this assumption is under-explored due to the poorly understood geometry of the multi-modal contrastive space, where a modality gap exists. In our study, we provide a theoretical explanation of this space's geometry and introduce a three-step method, $C^3$ (Connect, Collapse, Corrupt), to bridge the modality gap, enhancing the interchangeability of embeddings. Our $C^3$ method significantly improves cross-modal learning from uni-modal data, achieving state-of-the-art results on zero-shot image / audio / video captioning and text-to-image generation.
\end{abstract}

\section{Introduction}
\label{sec:introduction}

\begin{wrapfigure}{r}{0.39\linewidth}
    \centering
    \includegraphics[width=\linewidth]{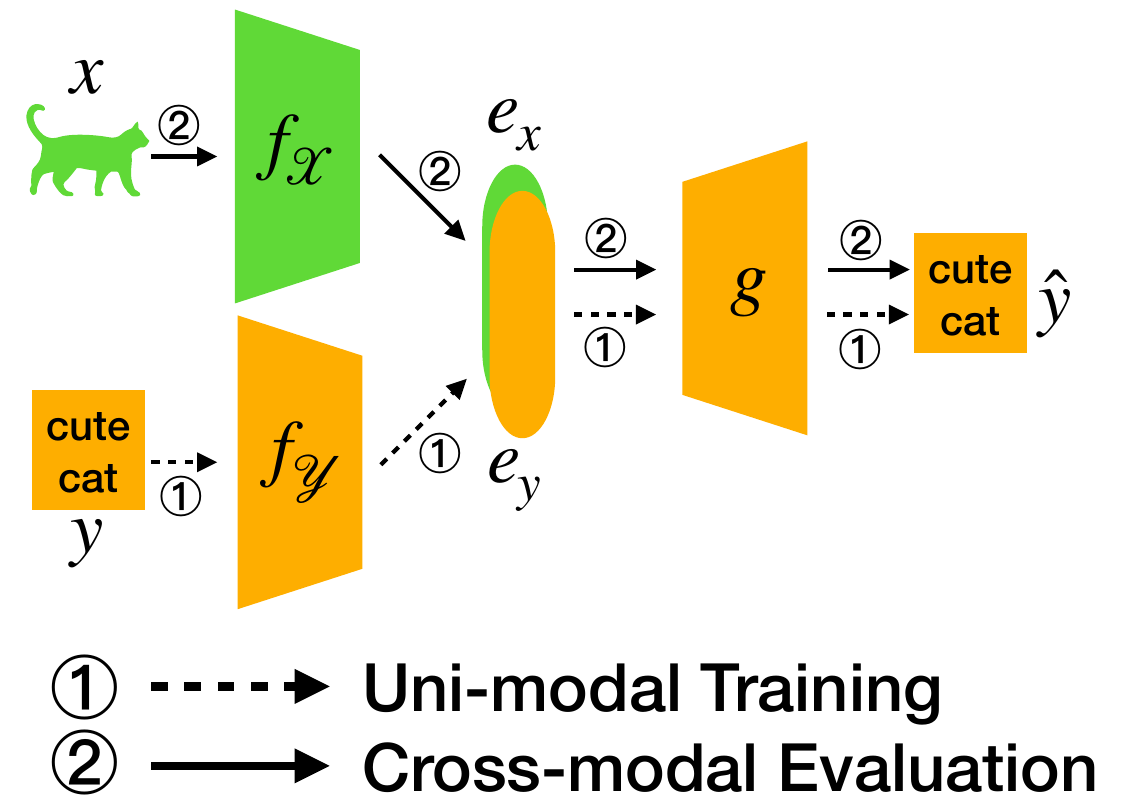}
    \vspace{-1.5em}
    \caption{Interchangeable use of embeddings enables learning cross-modal tasks with uni-modal data.}
    \label{fig:fig1}
    \vspace{-0.5em}
\end{wrapfigure}

Building cross-modal applications often face a critical challenge: the scarcity of paired multi-modal data. Large-scale multi-modal contrastive learning has emerged as a promising approach to address this gap. Pioneering works like CLIP~\citep{radford2021learning} and ImageBind~\citep{girdhar2023imagebind} offer publicly available representation spaces learned from contrasting millions of web-based images and texts. These spaces that align embeddings of concepts across different modalities pave the way for leveraging abundant uni-modal data in cross-modal tasks. For instance, instead of training an image captioning system on image embeddings, one could train it on caption embeddings. Likewise, text-to-image generation models can utilize image embeddings as opposed to text. During cross-modal inference, embeddings from the other modality are simply input into the model (Figure~\ref{fig:fig1}). Notably, this approach has achieved impressive results in image captioning~\citep{li2023decap,tam2023simple,nukrai-etal-2022-text} and text-to-image generation~\citep{zhou2022lafite,zhou2022lafite2,zhou2022shifted}, eliminating the need for multi-modal paired data.

This aforementioned process is based on the hypothesis that image embeddings and text embeddings can be used interchangeably in the multi-modal representation space.
However, the validity of this assumption is not well understood. Recent works reveal that the resulting geometry from multi-modal contrastive learning is nontrivial ---  corresponding image and text embeddings do not necessarily collapse to the same points in the space. Instead, there is a significant gap between embeddings from different modalities, potentially hindering the direct interchangeable use of image and text embeddings~\citep{liang2022mind,zhang2023diagnosing}. The lack of comprehension of the resulting geometry from contrastive learning makes it challenging to design methods to leverage this representation space and learn cross-modal tasks from uni-modal data.

\begin{wrapfigure}{r}{0.4\linewidth}
    \centering
    \includegraphics[width=\linewidth]{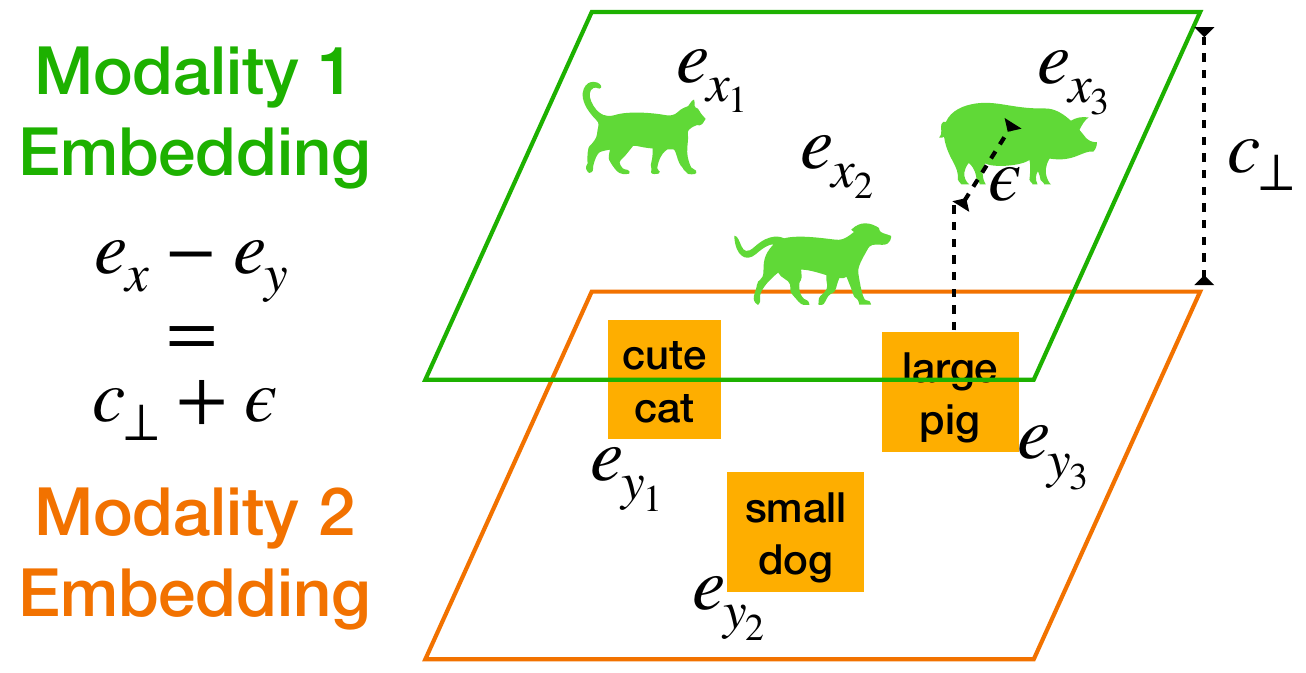}
    \vspace{-1.5em}
    \caption{Geometry of the multi-modal contrastive representation space.}
    \label{fig:fig2}
    \vspace{-0.5em}
\end{wrapfigure}

In this study, we conclude that the geometry of multi-modal contrastive representation space as (Figure~\ref{fig:fig2}):
\begin{align*}
\ve_x - \ve_y = \vc_\perp + \vepsilon
\end{align*}
where $\ve_x$ and $\ve_y$ denote the embeddings of paired inputs from different modalities, $\vc_\perp$ is a constant vector representing the \emph{modality gap}, which is orthogonal to the embedding span of $\ve_x$ and $\ve_y$, and $\epsilon$ is a random vector representing the \emph{alignment noise}, which can be approximated by a Gaussian distribution.

We provide a theoretical explanation of the above geometry. Specifically, the modality gap emerges during initialization as certain dimensions of image and text embeddings remain approximately constant in the embedding space, and the constants are distinct for images and texts due to separate initializations. During optimization, these constant dimensions lack a gradient that pushes them to align to the same value, leading to the preservation and orthogonality of the modality gap. Meanwhile, the alignment noise arises from the stable region produced by the contrastive loss, where points within a certain range result in a loss near zero, causing the optimization to halt.

Based on this understanding of the geometry, we propose a simple method, $C^3$, to improve the interchangeability of embeddings from different modalities, thereby enabling the learning of cross-modal tasks with uni-modal data. $C^3$ consists of three steps:

\vspace{-0.5em}
\begin{enumerate}[leftmargin=*]
\setlength{\itemsep}{0pt}
\setlength{\parskip}{0pt}
\setlength{\parsep}{0pt}
    \item \emph{Connect:} Related concepts from different modalities are connected via multi-modal contrastive learning, resulting in a shared representation space that can allow the interchangeability of embeddings from the different modalities. However, a modality gap and alignment noise exist.
    \item \emph{Collapse:} Based on its geometry, the modality gap can be closed by subtracting the embedding mean from each modality, eliminating the most dominant distributional difference between them.
    \item \emph{Corrupt:} Additional noise is introduced as regularization during training to improve the performance and robustness of learning cross-modal tasks from uni-modal data, given the alignment noise from multi-modal contrastive learning.
\end{enumerate}
\vspace{-0.5em}

We demonstrate the effectiveness and broad generalization of $C^3$ on four tasks: image, audio, video captioning and text-to-image generation, and achieve state-of-the-art performance on zero-shot evaluation settings when trained solely on uni-modal data. We also provide a detailed analysis of each component that contributes to performance improvements. Our method sheds new light on the possibility of achieving cross-modal tasks under a low-data regime, and our theoretical analysis provides insight into understanding multi-modal contrastive learning. 

Our contributions are three-fold:
\begin{enumerate}[leftmargin=*]
\setlength{\itemsep}{0pt}
\setlength{\parskip}{0pt}
\setlength{\parsep}{0pt}
\item We provide a theoretical explanation of the representation space geometry resulting from multi-modal contrastive learning.
\item Based on this geometry, we propose a simple three-step solution to enhance the interchangeability of embeddings from different modalities, improving cross-modal learning with uni-modal data.
\item We show the effectiveness of our method on image / audio / video captioning and text-to-image generation, achieving state-of-the-art results.
\end{enumerate}

\section{Learning Cross-Modal Tasks with Uni-Modal Data}
\label{sec:formulation}

In this section, we present the general notion of a cross-modal task and how we can leverage uni-modal data to learn such tasks.

\subsection{Cross-Modal Task Formulation}

Cross-modal tasks aim to learn a model that maps inputs from one modality $\gX$ (e.g., images) to another modality $\gY$ (e.g., texts). Given a paired multi-modal dataset $\gD = \{(x,y) \in \gX \times \gY\}$ (e.g., an image-caption dataset), the task can be achieved by minimizing the empirical risk $\gL_d$ between the predicted target $\hat{y}=g(f_\gX(x))$ and the true target $y$ over the dataset $\gD$, where $f_\gX : \gX \rightarrow \R^d$ is an encoder that maps inputs from $\gX$ to a $d$-dimensional representation space, and $g: \R^d \rightarrow \gY$ is a decoder that maps outputs from the encoder to $\gY$:
\begin{align*}
\min_{g,f_\gX} \frac{1}{|\gD|}\sum_{(x,y) \in \gD} \gL_d(\hat{y}, y), \quad \hat{y} = g(f_\gX(x))
\end{align*}
$\gL_d$ measures the discrepancy between the predicted and true targets, which can be mean squared error (MSE) for images and cross-entropy loss for texts.

\subsection{Enabling Cross-Modal Tasks with Uni-Modal Data}

The need for a multi-modal paired dataset $\gD$ to learn cross-modal tasks is suboptimal, as collecting such datasets can be expensive and time-consuming.
However, if we have a encoder $f_\gY : \gY \rightarrow \R^d$ that maps inputs from $\gY$ to the same representation space as the encoder $f_\gX$, i.e., $\forall x,y\in\gD, f_\gX(x) = f_\gY(y)$, we can train the cross-modal task using a uni-modal dataset $\gD' = \{y \in \gY\}$:
\begin{align*}
\min_{g} \frac{1}{|\gD'|}\sum_{y \in \gD'} \gL_d(\hat{y}, y), \quad \hat{y} = g(f_\gY(y))
\end{align*}
Note that $f_\gY$ should be frozen during training to maintain its embedding alignment with $f_\gX$. When evaluating in a cross-modal setting, we can replace $f_\gX$ with $f_\gY$, thus enabling cross-modal tasks with only uni-modal training data.

\subsection{Establishing a Shared Representation Space}

Recent advances in multi-modal contrastive learning has enabled for encoders that map similar inputs from different modalities to a shared representation space. Specifically, given a large multi-modal dataset\footnote{Acquiring domain-specific paired multi-modal datasets can be challenging. However, several works have gathered large-scale noisy image-caption pairs from the web and made pre-trained encoders $f_\gX$ and $f_\gY$ available for direct use.}, $n$ paired inputs are randomly sampled during each iteration and the following objective is optimized~\citep{radford2021learning}:
\begin{align*}
\min_{f_\gX, f_\gY}\gL &= -\frac{1}{2n} \sum_{i=1}^n \Big(\log \frac{\exp(\ve_{x_i} \cdot \ve_{y_i} / \tau)}{\sum_{j=1}^n \exp(\ve_{x_i} \cdot \ve_{y_j}  / \tau)}  + \log \frac{\exp(\ve_{x_i} \cdot \ve_{y_i} / \tau)}{\sum_{j=1}^n \exp(\ve_{x_j} \cdot \ve_{y_i}  / \tau)}\Big) 
\end{align*}
where $\ve_{x} = f_\gX(x)$, $\ve_{y}=f_\gY(y)$, and $\tau$ is the temperature hyperparameter. This loss function encourages high similarity between the embeddings of the paired image-texts relative to the similarities between unpaired ones. Intuitively, after optimizing the loss, paired image and text embeddings should collapse to the same point. However, empirical results have shown a significant gap between paired embeddings~\citep{liang2022mind}, which prevents the direct interchangeable use of image and text embeddings. In the next section, we provide a detailed analysis of the geometry of the multi-modal contrastive representation space.

\section{Multi-Modal Contrastive Representation Space Geometry}
\label{sec:theory}

We begin by providing the following proposition that describes the geometry of the multi-modal contrastive representation space. These findings inform our general method of how to adapt uni-modal data for cross-modal learning. 

\begin{proposition}
\textup{\textbf{(Multi-modal Contrastive Representation Space Geometry)}} \\
Given a paired image $x$ and text $y$, the relationship between the $\ell_2$-normalized image embedding $e_x$ and text embedding $e_y$ obtained from multi-modal contrastive learning can be described as:
\begin{align*}
\ve_x - \ve_y = \vc_\perp + \vepsilon
\end{align*}
where $\vc_\perp$ is a constant vector representing the \emph{modality gap} and is orthogonal to the image and text embedding span, i.e., $\forall \ve_{x_1}, \ve_{x_2}, \vc_\perp \cdot (\ve_{x_1} - \ve_{x_2}) = 0$; $\vepsilon \sim \gN(\bm{0}, \sigma^2 \bm{I})$ is a random Gaussian vector representing the \emph{alignment noise}.
\label{prop:geometry}
\end{proposition}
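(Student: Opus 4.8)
The plan is to split the difference $\ve_x - \ve_y$ into its mean and its fluctuation, $\ve_x - \ve_y = \E[\ve_x - \ve_y] + \big((\ve_x - \ve_y) - \E[\ve_x - \ve_y]\big)$, identify the first term with $\vc_\perp$ and the second with $\vepsilon$, and then establish the two stated properties separately: that $\vc_\perp$ is orthogonal to the embedding span and is created at initialization and preserved by training, and that the residual $\vepsilon$ is approximately isotropic Gaussian. The decomposition itself is definitional; all the content is in these two properties, and I would organize the argument around the two mechanisms flagged in the introduction — a gradient-invariance argument for the gap and a stable-region argument for the noise.

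For the gap, the key observation I would exploit is that the contrastive loss $\gL$ depends on the embeddings only through the pairwise dot products $\ve_{x_i}\cdot\ve_{y_j}$, and that each softmax term is invariant under adding a common constant to all logits appearing in a given sum. Concretely, suppose the embeddings decompose along a unit direction $\vu$ orthogonal to the span as $\ve_{x_i} = \vr_{x_i} + a\,\vu$ and $\ve_{y_j} = \vr_{y_j} + b\,\vu$, with the residuals $\vr$ lying in the span and $a,b$ shared scalars. Then $\ve_{x_i}\cdot\ve_{y_j} = \vr_{x_i}\cdot\vr_{y_j} + ab$, so the cross term $ab/\tau$ is added uniformly to every logit in each row and column and cancels inside the softmax. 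Hence $\gL$ is independent of $ab$, the tangential gradient of $\gL$ along $\vu$ vanishes, and the component of the embeddings along $\vu$ — in particular the gap $a-b$ — is a conserved quantity of the projected gradient flow. I would then combine this invariance with the cone-effect initialization: two independently initialized encoders place each modality's embeddings in a narrow cone about distinct axes, producing a nonzero offset $a-b$ along such a $\vu$ at initialization, which the invariance preserves. This yields $\vc_\perp = (a-b)\,\vu$, nonzero and orthogonal to the span; since the $a\,\vu$ terms cancel in any difference $\ve_{x_1}-\ve_{x_2} = \vr_{x_1}-\vr_{x_2}$, orthogonality $\vc_\perp\cdot(\ve_{x_1}-\ve_{x_2}) = 0$ follows immediately.

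For the alignment noise, I would argue that the in-span part of $\ve_x-\ve_y$ is not driven to zero because the contrastive objective possesses a large stable region. Once the paired similarity $\ve_{x_i}\cdot\ve_{y_i}$ exceeds the unpaired similarities by a $\tau$-dependent margin, each softmax saturates, the per-example loss sits near its floor, and the gradients effectively vanish; optimization therefore halts anywhere in a neighborhood of near-optimal configurations rather than at a single collapsed point. The residual in-span differences are frozen at whatever values they take when training stops, and these values are set by the accumulated randomness of initialization and stochastic optimization across many coordinates. Modeling each coordinate of the residual as an aggregate of many weakly dependent contributions, I would invoke a high-dimensional maximum-entropy / central-limit heuristic to justify approximating it by $\vepsilon\sim\gN(\bm 0,\sigma^2\bm I)$, with $\sigma$ set by the width of the stable region.

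The main obstacle is rigor in the Gaussianity claim: there is no exact reason the residual must be Gaussian, so this step is an approximation justified by high dimensionality and large sample size rather than a theorem, and I would back it up by checking empirically that the residual is well fit by an isotropic Gaussian. A secondary difficulty is that the conservation argument is exact only for idealized full-batch gradient flow with exact $\ell_2$ normalization and a gap direction that stays orthogonal to the span throughout training; under stochastic optimization and finitely many steps it holds only approximately, so I would state the orthogonality and constancy of $\vc_\perp$ as the leading-order description and rely on the measured near-orthogonality of the empirical gap to confirm it.
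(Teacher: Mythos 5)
Your proposal follows the same two-mechanism architecture as the paper: the gap is traced to a modality-dependent offset along directions outside the embedding span that is present at initialization and invisible to the contrastive gradient, and the residual noise is traced to the stable region of the loss (the paper's Lemma~\ref{prop:region}), with Gaussianity conceded to be an empirical approximation rather than a theorem --- exactly the paper's stance, which verifies it statistically on CLIP/MS-COCO. Where you differ is in how the gap-preservation step is established. The paper's Lemma~\ref{prop:gradient} computes $\nabla_{\ve_{x_i}} \gL$ explicitly and shows it is a convex-type combination of differences $\ve_{y_j} - \ve_{y_i}$, so it lies in the text-embedding span and carries no component along the collapsed (constant) dimensions; you instead observe that under the decomposition $\ve_{x_i} = \vr_{x_i} + a\,\vu$, $\ve_{y_j} = \vr_{y_j} + b\,\vu$ the cross term $ab/\tau$ shifts every logit uniformly and cancels in the softmax, making the loss independent of the off-span components and the gap a conserved quantity. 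These are two derivations of the same fact (the shift-invariance is precisely why the softmax derivative coefficients sum to zero, which is what makes the paper's gradient a span of differences), but your conserved-quantity phrasing is arguably more transparent about \emph{why} the invariance holds, while the paper's explicit gradient formula additionally yields the attraction/repulsion weights $\alpha_{x_j},\alpha_{y_j}$ used elsewhere in the analysis. Your attribution of the initial offset to the cone effect rather than to dimensional collapse is also consistent with the paper, which explicitly identifies the two as the same phenomenon in its appendix. The caveats you flag --- exact shared coefficients $a,b$ within a modality, interaction with $\ell_2$ normalization, stochastic rather than full-batch optimization --- are the same idealizations the paper makes (it notes the gap shrinks slightly under $\ell_2$ normalization in its simulation), so there is no substantive gap in your argument relative to the paper's.
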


In the following subsections, we will first explain why the modality gap $\vc_\perp$ exists before and after model optimization. Then, we will introduce the alignment noise $\vepsilon$ and its relation to the temperature parameter in contrastive loss.

\subsection{Modality Gap}
\label{sec:gap}

The presence of a modality gap and its orthogonality to the image and text embedding span are due to the \emph{joint effect of initialization and optimization}. 

\paragraph{Initialization.} The modality gap exists when randomly initializing multi-modal models, which can be explained by the dimensional collapse~\citep{jing2022understanding} of the representation space defined below.
\begin{definition}
\textup{\textbf{(Dimensional Collapse of the Representation Space)}}\\
Given a $d$-dimensional representation space $\R^d$, we define its \emph{effective dimension} $d_e$ as:
\begin{align*}
d_e = \arg\min_{d'} \frac{\sum_{i=1}^{d'} \sigma_i}{\sum_{i=1}^d \sigma_i} \geq \gamma
\end{align*}
where the $\sigma_i$'s are the singular values of the representation covariance matrix in decreasing order, and $\gamma$ thresholds the minimum variance explained by the $d_e$ dimensions. \emph{Dimensional collapse} occurs when $d_e \ll d$.
\label{prop:collapse}
\end{definition}

\begin{wrapfigure}{r}{0.5\linewidth}
    \centering
    \includegraphics[width=\linewidth]{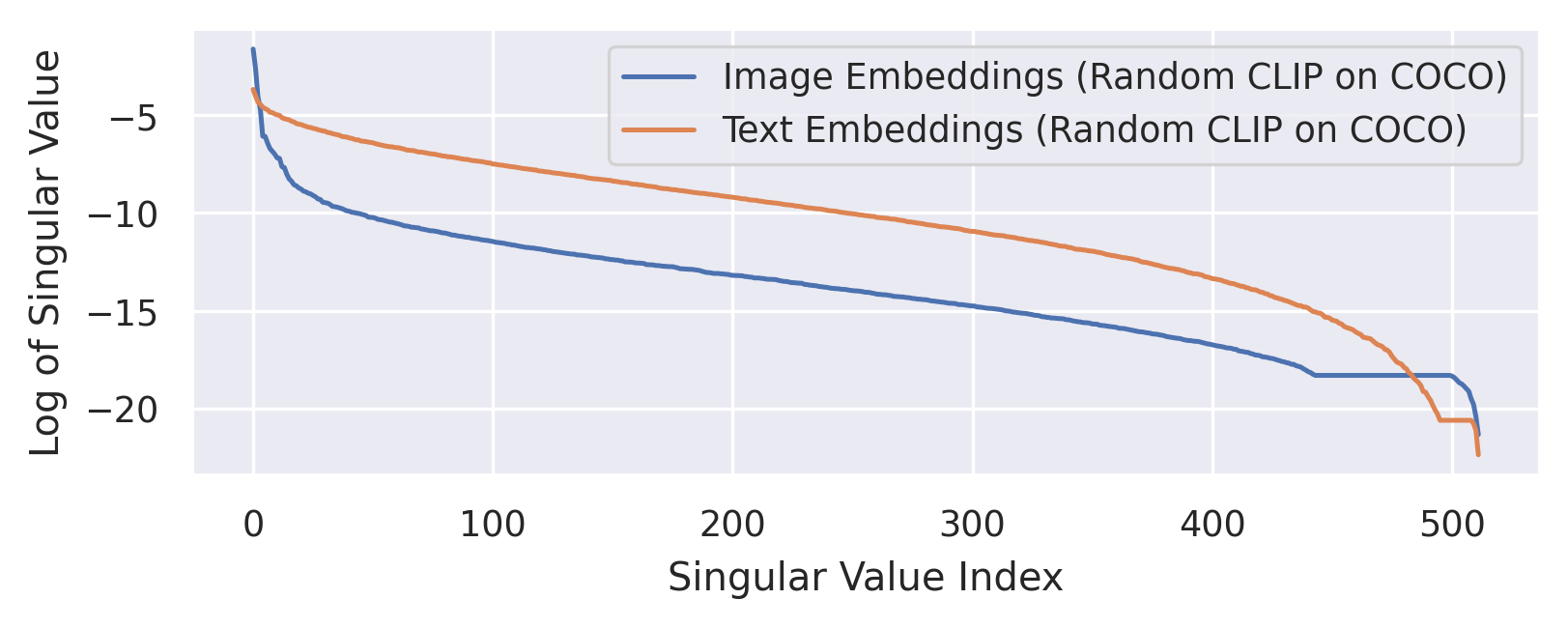}
    \vspace{-1.5em}
    \caption{\emph{Dimensional collapse of the CLIP representation space.} Singular values obtained from SVD reveal that the effective dimension of the image and text representation space is much smaller than the total number of dimensions.}
    \label{fig:svd}
    \vspace{-0.5em}
\end{wrapfigure}

To demonstrate the dimensional collapse phenomenon, we took a \emph{randomly initialized} CLIP with a $d=512$ representation space and fed MS-COCO images and captions as input. We obtained the corresponding image features and text features and performed SVD on the image feature and text feature covariance matrices. The distribution of the singular values is shown in Figure~\ref{fig:svd}, where we can clearly see that the effective dimension of both the image and text features is small. Specifically, when setting $\gamma=0.99$, the effective dimension of image embeddings is $d_{e,x}= 25$ and that of text embeddings is $d_{e,y} = 230$. Therefore, the effective dimension of the shared representation space $d_e \le d_{e,x} + d_{e,y} = 255$. The equality holds only when all the effective dimensions of the image and text are orthogonal.

Dimensional collapse indicates that only a small number of dimensions contribute significantly to the variance of the representation space, while the remaining dimensions can be viewed as constant. 
Suppose the shared representation space has a maximum effective dimension $d_e = 255$. This indicates that the image and text embeddings will remain constant in the $d_c = d - d_e = 257$ ineffective dimensions. As a result, a modality gap exists at the beginning of model optimization, as these $d_c$ ineffective dimensions will be inherently different for images and texts, given random initialization.

To verify this, we synthesize $n=1,000$ image and text embeddings in $d=512$ space. 
For each image embedding, we initialize the first $d_{e,x}$ dimensions, and for each text embedding, the $d_{e,x}$-th to $(d_{e,x} + d_{e,y})$-th dimensions, by randomly sampling a standard Gaussian distribution $\gN(0,1)$. All the other dimensions are set to a constant value drawn from a $\gN(0,1)$, where the constant for the image and text embeddings are different. Figure~\ref{fig:simulation} (left) illustrates this setup by showing the variance of each dimension. This setup mimics our findings of SVD analysis on CLIP, where we set image embeddings to have $d_{e,x}$ effective dimensions and text embeddings to have $d_{e,y}$ effective dimensions, and assume these effective dimensions are fully orthogonal. We normalize the embeddings to unit length before performing our analysis.

We observe a clear modality gap at the beginning: the $\ell_2$-distance between the mean of the image embeddings and the mean of the text embeddings is $1.21$. When we only consider the last $d_c$ ineffective dimensions, the average distance is $0.99$. More discussion is in Appendix~\ref{sec:appendix_collapse}.

\paragraph{Optimization.} The analysis above reveals that the modality gap exists at initialization, and we further analyze why optimizing for the multi-modal contrastive loss fails to close the gap. The following lemma reveals that there is no gradient in the modality gap direction, therefore the gap and its orthogonality will be preserved.

\begin{lemma}
\textup{\textbf{(Gradients in Contrastive Optimization)}} (Proof in Appendix~\ref{sec:appendix_proof}) \\
With the mild assumption of equal presence of $n$ images and texts with $p(x_i)=p(y_i)=1/n$, optimizing the multi-modal contrastive loss $\gL = -\frac{1}{2n} \sum_{i=1}^n \big(\log \frac{\exp(\ve_{x_i} \cdot \ve_{y_i} / \tau)}{\sum_{j=1}^n \exp(\ve_{x_i} \cdot \ve_{y_j}  / \tau)} + \log \frac{\exp(\ve_{x_i} \cdot \ve_{y_i} / \tau)}{\sum_{j=1}^n \exp(\ve_{x_j} \cdot \ve_{y_i}  / \tau)}\big)$ yields the following gradients:
\begin{align*}
\nabla_{\ve_{x_i}} \gL &= \lambda \sum_{j=1}^n \alpha_{y_j} (\ve_{y_j} - \ve_{y_i}), \quad\quad\nabla_{\ve_{y_i}} \gL = \lambda \sum_{j=1}^n \alpha_{x_j} (\ve_{x_j} - \ve_{x_i})
\end{align*}
\noindent where $\lambda=1 / (2n\tau)$, $\alpha_{x_j} = p(x_j|y_i)+p(y_i|x_j)$, 
$\alpha_{y_j} = p(y_j|x_i)+p(x_i|y_j)$, 
$p(x_i|y_j)=\frac{\exp(\ve_{x_i} \cdot \ve_{y_j} / \tau)} {\sum_{k=1}^n \exp(\ve_{x_k} \cdot \ve_{y_j} / \tau)}$, 
$p(y_i|x_j) = \frac{\exp(\ve_{y_i} \cdot \ve_{x_j} / \tau)}{\sum_{k=1}^n \exp(\ve_{y_k} \cdot \ve_{x_j} / \tau)}$, and $\tau$ is temperature.
\label{prop:gradient}
\end{lemma}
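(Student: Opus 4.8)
The plan is to differentiate $\gL$ directly, exploiting the fact that each of its two summands is a softmax cross-entropy whose gradient with respect to the logits is the familiar ``predicted minus target'' expression. First I would introduce the shorthand $s_{ij} = \ve_{x_i}\cdot\ve_{y_j}/\tau$ and write $\gL = -\frac{1}{2n}\sum_i\big[\log p(y_i|x_i) + \log p(x_i|y_i)\big]$, separating the image-to-text term $L_1$ (a row-normalized softmax) from the text-to-image term $L_2$ (a column-normalized softmax). The gradient $\nabla_{\ve_{x_i}}\gL$ then splits into contributions from $L_1$ and $L_2$, which I would handle separately and only combine at the end.

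For $L_1$, the variable $\ve_{x_i}$ appears only in the $i$-th summand: through $s_{ii}$ in the numerator and through every $s_{ij}$ in the denominator $\log\sum_j \exp(s_{ij})$. Differentiating yields $\nabla_{\ve_{x_i}}L_1 = \frac{1}{2n\tau}\big(\sum_j p(y_j|x_i)\,\ve_{y_j} - \ve_{y_i}\big)$. Because $\sum_j p(y_j|x_i)=1$ (this softmax is normalized over $j$), I can write $\ve_{y_i} = \sum_j p(y_j|x_i)\,\ve_{y_i}$ and fold it into the sum, obtaining the clean form $\lambda\sum_j p(y_j|x_i)(\ve_{y_j}-\ve_{y_i})$ with $\lambda = 1/(2n\tau)$.

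The term $L_2$ is where the real work lies, and I expect it to be the main obstacle. Here $\ve_{x_i}$ enters the numerator of only the $i$-th summand, but enters the denominator $\log\sum_j\exp(s_{ji})$ of \emph{every} summand (as the $j=i$ contribution). Differentiating gives $\nabla_{\ve_{x_i}}L_2 = \frac{1}{2n\tau}\big(\sum_j p(x_i|y_j)\,\ve_{y_j} - \ve_{y_i}\big)$. Unlike before, the coefficients $p(x_i|y_j)$ are column-normalized, so $\sum_j p(x_i|y_j)$ is \emph{not} automatically $1$ and I cannot immediately fold $\ve_{y_i}$ into the sum. This is precisely where the equal-presence assumption $p(x_i)=p(y_i)=1/n$ is invoked: treating the softmax quantities as genuine conditionals of a joint with uniform marginals, the marginalization identity $p(x_i)=\sum_j p(x_i|y_j)\,p(y_j)$ forces $\sum_j p(x_i|y_j)=1$ (equivalently, Bayes' rule identifies $p(x_i|y_j)$ with $p(y_j|x_i)$, whose sum over $j$ is $1$). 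With this normalization in hand, $L_2$ contributes $\lambda\sum_j p(x_i|y_j)(\ve_{y_j}-\ve_{y_i})$.

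Adding the two contributions gives $\nabla_{\ve_{x_i}}\gL = \lambda\sum_j\big[p(y_j|x_i)+p(x_i|y_j)\big](\ve_{y_j}-\ve_{y_i}) = \lambda\sum_j \alpha_{y_j}(\ve_{y_j}-\ve_{y_i})$, matching the claim. The gradient $\nabla_{\ve_{y_i}}\gL$ then follows by repeating the identical computation with the roles of $x$ and $y$ exchanged (which swaps $L_1$ and $L_2$), yielding $\lambda\sum_j\alpha_{x_j}(\ve_{x_j}-\ve_{x_i})$. Throughout I would treat $\ve_{x_i}$ as a free variable when differentiating, leaving the $\ell_2$-normalization implicit as the statement does; the one delicate point to state carefully is the justification of $\sum_j p(x_i|y_j)=1$ from the uniform-marginal assumption, since the two softmax directions are not exactly Bayes-consistent without it.
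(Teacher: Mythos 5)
Your proposal is correct and follows essentially the same route as the paper's proof: a direct differentiation of the two softmax cross-entropy terms, with the key step being the use of Bayes' rule under the uniform-marginal assumption to establish $\sum_j p(x_i|y_j)=1$ so that $\ve_{y_i}$ can be folded into the sum. The only difference is presentational --- you treat the two loss terms separately and combine at the end, while the paper differentiates them in a single chain --- and you correctly identify the same delicate point the paper isolates as its preliminary claim.
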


Lemma~\ref{prop:gradient} highlights that the gradients of image embeddings during contrastive optimization are fully determined by the text embedding span, and vice versa. Due to the dimensional collapse of the image and text embedding span, the contrastive optimization process fails to propagate gradients in the direction of the ineffective dimensions, resulting in gap preservation and orthogonality to the image and text embedding span after optimization. Lemma~\ref{prop:gradient} also implies that the effective dimensionality of the joint representation space remains unchanged after optimization. 

To empirically verify this, we optimize the multi-modal contrastive loss $\gL$ on the $n=1,000$ previously synthesized image and text embeddings. We optimize for $200$K steps with a learning rate $0.1$, and CLIP's initial temperature of $\tau=0.07$. We compute the variance of each dimension pre- and post-contrastive optimization and plot the results in Figure~\ref{fig:simulation} (right). From the figure, we can see that the first $d_e = 255$ effective dimensions are aligned after optimization while the last $d_c=257$ ineffective ones remain unchanged. 
This verifies that no gradient is propagated to the ineffective dimensions, as the variance for these dimensions remains zero after optimization. The modality gap becomes slightly smaller (0.82 compared to 0.99 before optimization) mainly due to $\ell_2$-regularization, where changes in the effective dimensions affect changes in the ineffective ones.

\begin{figure}[!tb]
    \centering
    \includegraphics[width=0.49\linewidth]{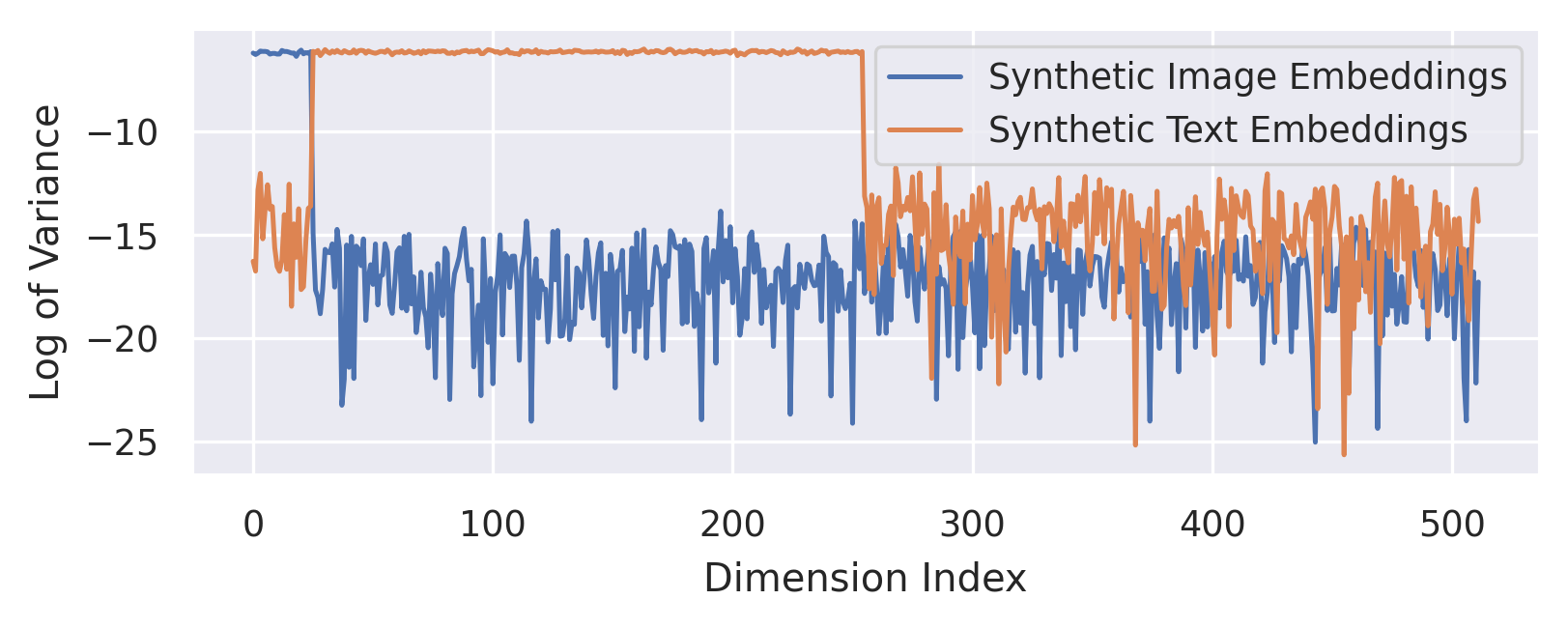}
    \includegraphics[width=0.49\linewidth]{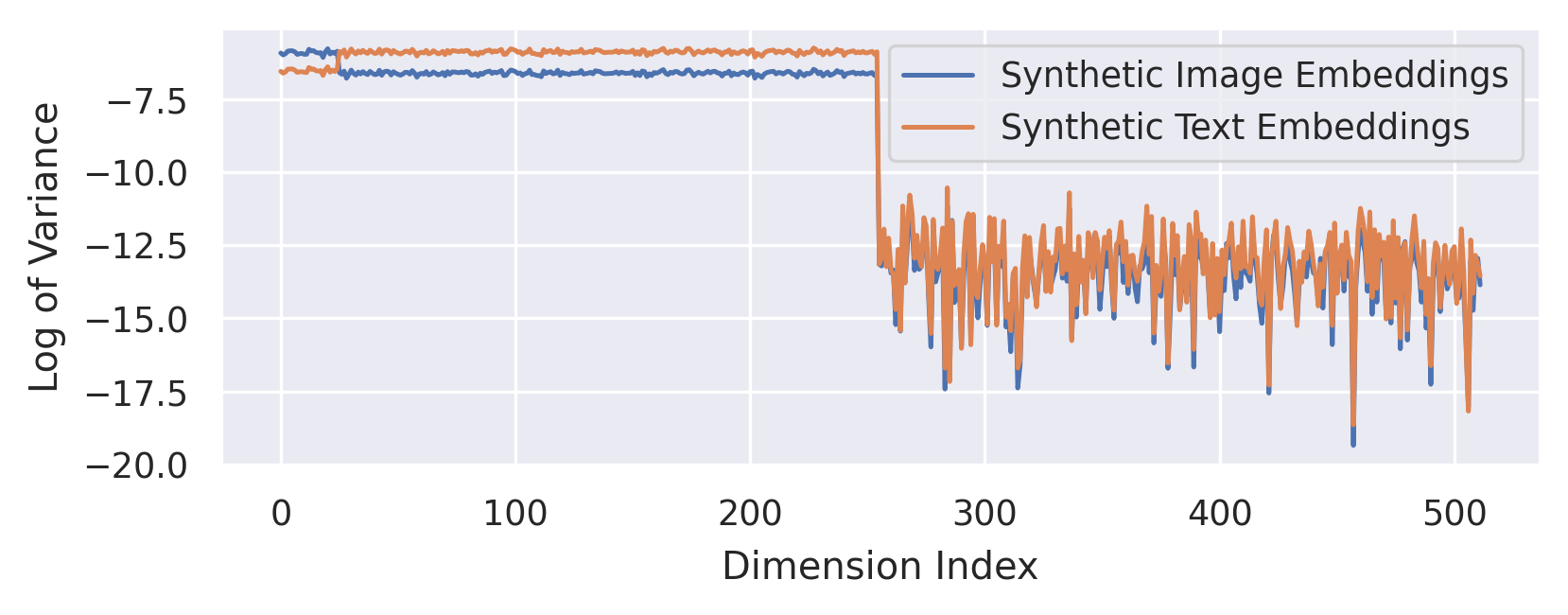}
    \vspace{-1.5em}
    \caption{\emph{Variance of each dimension before (left) and after (right) multi-modal contrastive optimization.} Our analysis reveals that gradients will only be propagated to effective dimensions and no gradient will be propagated to ineffective dimensions. Therefore, the effective dimensions are aligned while ineffective dimensions remain constant after optimization.}
    \label{fig:simulation}
    \vspace{-0.5em}
\end{figure}

In summary, due to dimensional collapse at model initialization, there are ineffective dimensions where image and text embeddings can be viewed as different constants, resulting in a modality gap at initialization. During optimization, these ineffective dimensions have no gradient update, and thus, the modality gap and its orthogonality are preserved after optimization.

\subsection{Alignment Noise}
\label{sec:misalignment}

In this section, we explain alignment noise after multi-modal contrastive learning. This noise results from the stable region of contrastive loss demonstrated in the following lemma, where we can consider a single term in the loss given the symmetry of the added terms.

\begin{lemma}
\textup{\textbf{(Stable Region Controlled by Temperature)}} (Proof in Appendix~\ref{sec:appendix_proof}) \\
We consider a single term in the multi-modal contrastive loss
$\gL_i=-\log \frac{\exp(\ve_{x_i} \cdot \ve_{y_i} / \tau)}{\sum_{j=1}^n \exp(\ve_{x_i} \cdot \ve_{y_j} / \tau)}$. We define the \emph{margin} $r = \ve_{x_i} \cdot \ve_{y_i} - \max_{j \ne i} \ve_{x_i} \cdot \ve_{y_j}$ as the measure of the similarity difference between the matched pair and the hardest negative pair. When $r$ exceeds a threshold given below, $\gL_i$ falls below a small pre-set value $\delta$, where we assume optimization ends:
\begin{align*}
r \ge \tau \log \frac{o(\tau)}{\exp (\delta) - 1},
\end{align*}
where $o(\tau)$ is a monotonically increasing function of temperature $\tau$ that satisfies $1 < o(\tau) < n$. Therefore, the required margin $r$ is monotonically increasing with $\tau$.
\label{prop:region}
\end{lemma}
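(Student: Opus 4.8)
The plan is to reduce $\gL_i$ to an exact closed form in the margin $r$ and the temperature $\tau$, and then read off the threshold directly. Writing $s_{ij} = \ve_{x_i} \cdot \ve_{y_j}$ and dividing the numerator and denominator by $\exp(s_{ii}/\tau)$, the single loss term becomes
\begin{align*}
\gL_i = \log\Big(1 + \sum_{j \ne i} \exp\big((s_{ij} - s_{ii})/\tau\big)\Big).
\end{align*}
Let $M = \max_{j \ne i} s_{ij}$, so that by the definition of the margin, $r = s_{ii} - M$. Factoring $\exp(-r/\tau)$ out of the sum yields the exact identity
\begin{align*}
\gL_i = \log\big(1 + o(\tau)\,e^{-r/\tau}\big), \qquad o(\tau) := \sum_{j \ne i} \exp\big((s_{ij} - M)/\tau\big),
\end{align*}
which isolates the entire margin dependence into the factor $e^{-r/\tau}$ and defines the function $o(\tau)$.

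Next I would establish the two claimed properties of $o(\tau)$. For the bounds, I observe that every exponent $s_{ij} - M \le 0$, so each of the $n-1$ summands lies in $(0,1]$: the maximizing index contributes exactly $1$, while the remaining strictly positive terms push the sum above $1$, giving $1 < o(\tau) \le n-1 < n$. For monotonicity, I would use that for any $a \le 0$ the map $\tau \mapsto e^{a/\tau}$ is nondecreasing on $\tau > 0$, since its derivative $-(a/\tau^2)e^{a/\tau}$ is nonnegative, and is strictly increasing when $a < 0$; summing over $j$ shows that $o(\tau)$ is monotonically increasing in $\tau$.

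Finally I would solve the stopping condition. Because the logarithm is monotone, $\gL_i \le \delta$ is equivalent to $o(\tau)\,e^{-r/\tau} \le \exp(\delta) - 1$, and taking logarithms and rearranging yields exactly
\begin{align*}
r \ge \tau \log \frac{o(\tau)}{\exp(\delta) - 1},
\end{align*}
the stated threshold. To obtain monotonicity of this required margin in $\tau$, I would write the right-hand side as $\tau\big(\log o(\tau) - \log(\exp(\delta) - 1)\big)$ and note that both the prefactor $\tau$ and the bracketed term are increasing, the latter because $o(\tau)$ is increasing.

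The main obstacle is the final monotonicity claim rather than the closed-form computation, which is routine. The product of an increasing positive factor $\tau$ and an increasing factor $\log\big(o(\tau)/(\exp(\delta)-1)\big)$ is increasing only once the logarithm is verified to be \emph{positive}; this is where I would use that $\delta$ is a small preset value, so $\exp(\delta) - 1 < 1 < o(\tau)$ and hence the logarithm exceeds $\log 1 = 0$. I would also flag that establishing the strict bound $1 < o(\tau)$ (as opposed to merely $o(\tau) \ge 1$) relies on there being at least two negatives, i.e. $n \ge 3$, which is the regime of interest.
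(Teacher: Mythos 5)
Your proposal is correct and follows essentially the same route as the paper's proof: both rewrite $\gL_i = \log\big(1 + \sum_{j\ne i}\exp((\ve_{x_i}\cdot\ve_{y_j} - \ve_{x_i}\cdot\ve_{y_i})/\tau)\big)$, pull out the hardest negative to obtain the form $\log(1 + o(\tau)e^{-r/\tau})$ with $1 < o(\tau) < n$ increasing in $\tau$, and invert to get the margin threshold. Your version is marginally cleaner in two respects --- you factor out $e^{-r/\tau}$ exactly rather than bounding the sum by $o(\tau)$ times its largest term as the paper does (the paper then takes a ceiling to define $o(\tau)$), and you explicitly verify that $\log\big(o(\tau)/(\exp(\delta)-1)\big) > 0$ for small $\delta$ before concluding the final monotonicity in $\tau$, a point the paper asserts without justification.
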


\begin{wrapfigure}{r}{0.5\linewidth}
    \centering
    \includegraphics[width=\linewidth]{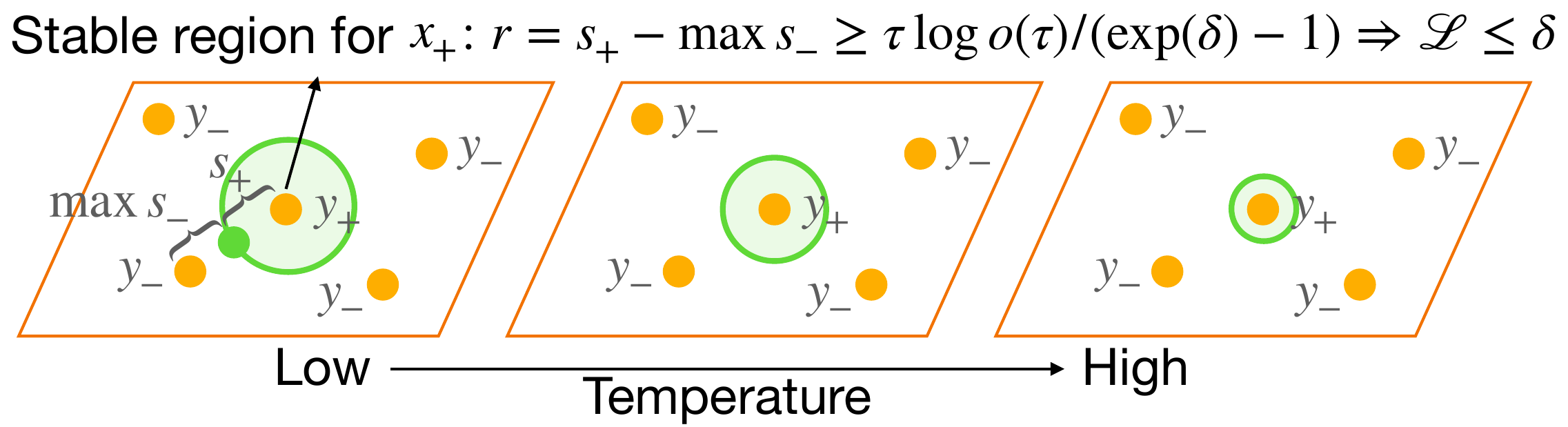}
    \vspace{-1.5em}
    \caption{\emph{Stable region (green area) of contrastive learning controlled by temperature.} Within the stable region, the loss falls below a small preset value, indicating that optimization has ended. The region increases as the temperature decreases. }
    \label{fig:region}
    \vspace{-0.5em}
\end{wrapfigure}

Lemma~\ref{prop:region} suggests that there is a stable region of the contrastive loss. This region can be viewed as a function of temperature where region size increases as the temperature decreases, with the required margin becoming smaller. Within this stable region, the loss falls below the small value $\delta$, indicating that optimization has ended. In the extreme case where $\tau \rightarrow 0_+$, if $r \ge 0$, the loss will be less than $\delta$ ($\delta \rightarrow 0_+$ for this case). This means that given all the $y_j$, $x_i$ can end up within a region instead of a fixed point, resulting in the same zero loss. Figure~\ref{fig:region} illustrates the stable region defined by the margin with regard to the temperature. Therefore, there may be a mismatch between the matched pairs in the representation space, resulting in alignment noise. 

\subsection{Summary}

\begin{wraptable}{r}{0.23\linewidth}
    \vspace{-3em}
    \centering
    \small
    \setlength\tabcolsep{2pt}
    \resizebox{0.23\columnwidth}{!}{
    \begin{tabular}{l|cc}
    \toprule
    \textbf{Statistic} & \textbf{Mean} & \textbf{Std} \\
    \midrule
    $\|\vd^{(i)}\|_2$ & 0.83 & 0.01 \\
    $\cos(\vd^{(i)}, \vd^{(j)}) $ & 0.99 & 0.00 \\
    $\cos(\vd^{(i)}, \vr_{j,k}^{(i)})]$ & 0.00 & 0.06 \\
    $\E [\vepsilon_j^{(i)}]_k$ & 0.00 & 0.00 \\
    $\cos(\vepsilon_j^{(i)}, \vepsilon_k^{(i)}) $ & 0.00 & 0.10 \\
    \bottomrule
    \end{tabular}
    }
    \vspace{-1em}
    \caption{Statistics that reveals representation space geometry.}
    \label{tab:statistics}
    \vspace{-3em}
\end{wraptable}

In Section~\ref{sec:gap} and~\ref{sec:misalignment}, we explain the modality gap $\vc_\perp$ and alignment noise $\vepsilon$ in Proposition~\ref{prop:geometry}, respectively. Combining them together, we explain the geometric relation of paired embeddings $\ve_x - \ve_y = \vc_\perp + \vepsilon$. 

We verify this geometric relation using CLIP on the MS-COCO image-caption dataset. We randomly group each 100 images into group $i$, and define individual gap $\vd_j^{(i)} = \ve_{x_j}^{(i)} - \ve_{y_j}^{(i)}$, group gap $\vd^{(i)} = \E_j [\vd_j^{(i)}]$, image difference $\vr_{j,k}^{(i)} = \ve_{x_j}^{(i)} - \ve_{x_k}^{(i)}$, alignment noise $\vepsilon_{j}^{(i)} = \vd_{j}^{(i)} - \vd^{(i)}$, where $j,k$ are image or text indices. These statistics are computed in Table~\ref{tab:statistics}, where the first three statistics show that the modality gap $\vc_\perp$ approximates a constant vector orthogonal to the image and text embedding span, and the last two statistics show that the alignment noise $\vepsilon$ can be viewed as Gaussian noise. We provide a detailed explanation of how to interpret these statistics in Appendix~\ref{sec:empirical_verification}.

This geometric analysis in this section serves as the foundation of the approach we introduce in the next section, where we develop a simple method to align the shared representation space and enable learning cross-modal tasks with uni-modal data.
\section{Connect, Collapse, Corrupt}
\label{sec:method}

Proposition~\ref{prop:geometry} from Section~\ref{sec:theory} reveals the geometry of the multi-modal contrastive representation space. Based on this, we propose three steps, \emph{connect, collapse, corrupt ($C^3$)}, to align the representation space, making it possible for embeddings from different modalities to be interchangeably consumed by the decoder and thus enabling learning cross-modal tasks from uni-modal data.

\paragraph{Stage 1: Connect.}

This stage establishes connections between similar concepts across different modalities. We leverage recent advances in multi-modal contrastive learning~\citep{radford2021learning} and use encoders trained with this strategy to build cross-modal models. However, a modality gap and alignment noise exists after multi-modal contrastive learning, as shown in Proposition~\ref{prop:geometry}.

\paragraph{Stage 2: Collapse.}

When directly using embeddings of different modalities as input, there is a drastic degradation in performance due to the modality gap, which causes input distributions to the decoder to differ. To address this issue, we adopt a simple approach proposed by~\citep{zhang2023diagnosing} that effectively removes the modality gap. Specifically, during training, in place of $\ve_x$, we feed in $\ve_x' = \ve_x - \E_x [\ve_x]$ to the decoder, and during inference with another modality, in place of $\ve_y$, we feed in $\ve_y' = \ve_y - \E_y [\ve_y]$. This approach collapses the modality gap, eliminating the input distribution mismatch between the two modalities:
\begin{align*}
\ve_x' - \ve_y' = (\ve_x - \ve_y) - (\E_x [\ve_x] - \E_y [\ve_y]) = \vepsilon
\end{align*}

\paragraph{Stage 3: Corrupt.}

After removing the modality gap, there is still alignment noise which can be approximated by a $\gN(0, \sigma^2 I)$. During unsupervised training, instead of directly decoding $y$ from $\ve_y'$, we add explicit Gaussian noise to the input and decode $y$ from $\ve_y'' = \ve_y' + \vepsilon$ following~\citep{nukrai-etal-2022-text,zhou2022lafite}. By introducing this noise, the uni-modal and multi-modal training processes become similar, and the learned decoder is more robust and invariant to the small perturbation $\gN(0, \sigma^2 I)$, leading to improved performance.

Appendix Algorithm~\ref{alg:c3} summarizes the entire procedure of our proposed method, $C^3$, that enables learning cross-modal tasks with uni-modal data.

\section{Results}
\label{sec:results}

In this section, we verify the effectiveness of our proposed method, $C^3$, on four tasks: image captioning, audio captioning, video captioning, and text-to-image generation. We show that our method achieves state-of-the-art performances, generalizes to different modalities and contrastive embedding spaces, and is especially useful when multi-modal data are limited.

\subsection{Image Captioning}

We use the ClipCap model~\citep{mokady2021clipcap}, pairing a frozen CLIP ViT-B/32 image encoder~\citep{radford2021learning} with a GPT-2 decoder~\citep{radford2019language}. A lightweight MLP mapping network bridges the dimensional gap between CLIP (512-$d$) and GPT-2 (768-$d$) embeddings and also produces a prefix for GPT-2 caption generation. We train and evaluate on the MS-COCO dataset~\citep{mscoco} using the standard split~\citep{karpathysplit}, comprising 113K training images and 5K each for validation and testing, with each image having 5 captions. We utilize metrics such as BLEU~\citep{papineni2002bleu} and ROUGE~\citep{lin2004rouge} to evaluate lexical and semantic similarity between generated and human captions. 

We first train our model for text reconstruction using the MS-COCO captions only. Following $C^3$, we extract the text embedding from the frozen CLIP text encoder and apply the collapse operation (remove pre-computed mean) and corrupt operation (add Gaussian noise). After training, we evaluate our model in the cross-modal setting, replacing the text encoder with the CLIP image encoder and decoding captions from image embeddings. We refer to this evaluation setting as \emph{image-free zero-shot} evaluation, as images are not seen during training. Additionally, we fine-tune the pre-trained model on different amounts of image-caption pairs and evaluate its performance. We refer to this evaluation setting as \emph{semi-supervised} evaluation. More details can be found in Appendix~\ref{sec:appendix_i2t}. We show image-free zero-shot captioning results in Table~\ref{tab:i2t} and semi-supervised captioning results in Figure~\ref{fig:active}. 

\paragraph{$C^3$ achieves state-of-the-art image-free zero-shot captioning results.}
As shown in Table~\ref{tab:i2t}, our proposed method, $C^3$, outperforms previous state-of-the-art methods in image-free captioning. Details of the other methods can be found in Appendix Section~\ref{sec:related_works}. Our ablation analysis demonstrates that both the collapse and corrupt components are crucial for improving cross-modal evaluation performance, as they eliminate the differences between embeddings from different modalities. Notably, the most competitive baseline, CapDec~\citep{nukrai-etal-2022-text}, can be viewed as an ablated version of $C^3$, but without an analysis of why the corruption works. Our proposed method, however, provides a clear explanation based on a geometric analysis of the multi-modal embedding space, and we further improve performance by introducing a collapse step. Overall, $C^3$ represents a potential standard approach for future works that use a multi-modal contrastive embedding space.

\begin{table*}[htbp]
\centering
\small
\setlength\tabcolsep{2pt}
\resizebox{\columnwidth}{!}{
\begin{tabular}{l|ccc|cccccc}
\toprule

\textbf{Method} & \textbf{Conn.} & \textbf{Coll.} & \textbf{Corr.} & \textbf{BLEU-1}$_\uparrow$  & \textbf{BLEU-4}$_\uparrow$  & \textbf{METEOR}$_\uparrow$  & \textbf{ROUGE-L}$_\uparrow$ & \textbf{CIDEr}$_\uparrow$  & \textbf{SPICE}$_\uparrow$  \\
\midrule
\multicolumn{10}{c}{\emph{Baselines}} \\
ZeroCap~(\citeyear{tewel2022zerocap}) & \ding{55} & \ding{55} & \ding{55} & 49.8 & 7.0 & 15.4 & 31.8 & 34.5 & - \\
MAGIC~(\citeyear{su2022language}) & \ding{55} & \ding{55} & \ding{55} & 56.8 & 12.9 & 17.4 & 39.9 & 49.3 & 11.3 \\
ESPER~(\citeyear{yu2022multimodal}) & \ding{55} & \ding{55} & \ding{55} & - & 21.9 & 21.9 & - & 78.2 & - \\
CLIPRe~(\citeyear{li2023decap}) & \ding{51} & \ding{55} & \ding{55} & - & 4.6 & 13.3 & - & 25.6 & 9.2 \\
DeCap~(\citeyear{li2023decap}) & \ding{51} & \ding{55} & \ding{55} & - & 8.9 & 17.5 & - & 50.6 & 13.1 \\
WS-ClipCap~(\citeyear{tam2023simple}) & \ding{51} & \ding{55} & \ding{55} & 50.3 & 9.6 & 15.2 & 37.5 & 33.7 & 8.6 \\
WS-ClipCap-Multi~(\citeyear{tam2023simple}) & \ding{51} & \ding{55} & \ding{51} & 65.5 & 22.1 & 22.2 & 48.0 & 74.6 & 14.9 \\
CapDec~(\citeyear{nukrai-etal-2022-text}) & \ding{51} & \ding{55} & \ding{51} & 69.2 & 26.4 & \textbf{25.1} & 51.8 & 91.8 & - \\
\midrule
\multicolumn{10}{c}{\emph{Ours}} \\
$C^1$ & \ding{51} & \ding{55} & \ding{55} & 28.1 & 2.4 & 12.2 & 25.4 & 13.0 & 6.8 \\
$C^2_1$ & \ding{51} & \ding{51} & \ding{55} & 44.4 & 6.1 & 15.5 & 33.6 & 25.2 & 9.2 \\ 
$C^2_2$ & \ding{51} & \ding{55} & \ding{51} & 69.0 & 25.5 & 24.3 & 50.8 & 87.6 & 17.6 \\
$C^3$ & \ding{51} & \ding{51} & \ding{51} & \textbf{71.0}\tiny{$\pm$0.1} & \textbf{27.7}\tiny{$\pm$0.1} & 25.0\tiny{$\pm$0.0} & \textbf{52.0}\tiny{$\pm$0.0} & \textbf{93.3}\tiny{$\pm$0.3} & \textbf{18.3}\tiny{$\pm$0.1} \\

\bottomrule
\end{tabular}
}
\vspace{-1em}
\caption{\emph{Image-free image-to-text captioning results.} We achieve state-of-the-art zero-shot image captioning and our ablation shows the effectiveness of each component in our method. Baseline results are from the original paper, where some metrics were not reported. Std with 3 runs reported.} 
\label{tab:i2t}
\end{table*}

\begin{figure*}[htbp]
    \centering
    \includegraphics[width=\linewidth]{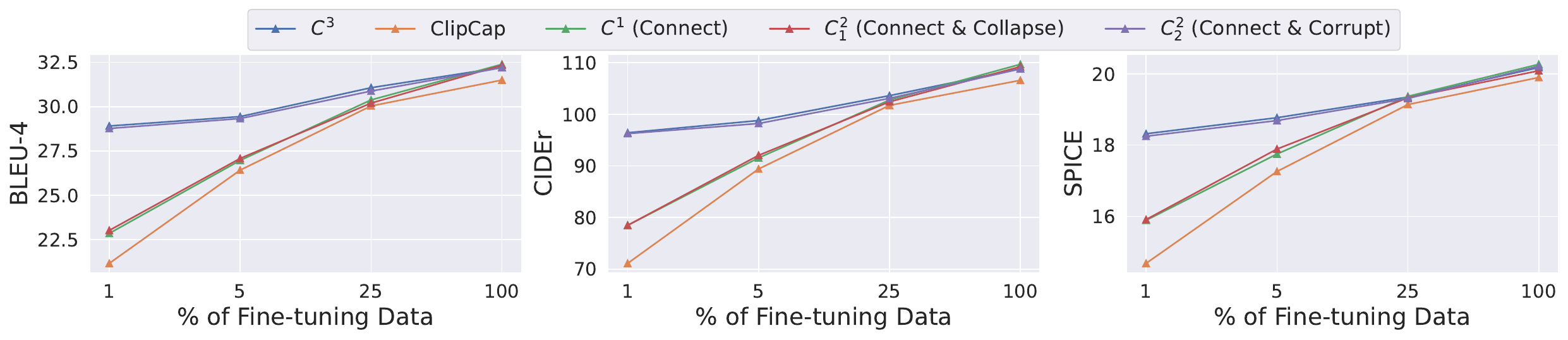}
    \vspace{-2em}
    \caption{\emph{Image-to-text captioning results in the low data regime.} When paired multi-modal data are limited, our approach that leverages uni-modal data for pre-training leads to substantial improvements compared to the purely supervised method (ClipCap).}
    \label{fig:active}
    \vspace{-1em}
\end{figure*}

\paragraph{$C^3$ is particularly useful in low-data regimes.} In scenarios where multi-modal data is limited, our method remains highly effective. To demonstrate this, we fine-tuned our pre-trained model on 1\%, 5\%, 25\%, and 100\% of the MS-COCO training image-text pairs, and compared our performance with a fully supervised baseline (ClipCap) and ablated models (see Figure~\ref{fig:active}). The results clearly show that $C^3$ outperforms the fully supervised baseline across all metrics, with the most significant improvements seen in low-data regimes where multi-modal paired data are limited. Thus, our method represents a promising solution for achieving cross-modal tasks in such scenarios.

\paragraph{Qualitative analysis of collapse and corrupt.} Both the collapse and corrupt components of our method show consistent improvements, but it is not immediately clear how they do so. To address this, we provide qualitative results in Appendix~\ref{sec:appendix_i2t}. We can see that after collapsing, the generated captions are much more natural and fluent, as it removes the most significant distributional difference between image and text embeddings. After corrupting, the model generates more accurate and faithful text descriptions of the image. We hypothesize that adding noise makes the decoder robust to small variations in the embedding space. Therefore when evaluating in cross-modal settings, the alignment noise will not affect the prediction and thus reduces hallucination in the generated caption.

\subsection{Text-to-Image Generation}

\begin{wraptable}{r}{0.46\linewidth}
\vspace{-1em}
\centering
\small
\setlength\tabcolsep{2pt}
\resizebox{0.46\columnwidth}{!}{
\begin{tabular}{l|ccc|cc}
\toprule

\textbf{Method} & \textbf{Conn.} & \textbf{Coll.} & \textbf{Corr.} & \textbf{FID}$_\downarrow$ & \textbf{IS}$_\uparrow$\\
\midrule

\multicolumn{6}{c}{\emph{Baselines}} \\
DALL-E~(\citeyear{ramesh2021zero}) & \ding{55} & \ding{55} & \ding{55} & 27.5 & 17.9 \\
CogView~(\citeyear{ding2021cogview}) & \ding{55} & \ding{55} & \ding{55} & 27.1 & 18.2 \\
LAFITE$_G$~(\citeyear{zhou2022lafite}) & \ding{51} & \ding{55} & \ding{51} & 20.9 & 24.9 \\
\midrule
\multicolumn{6}{c}{\emph{Ours}} \\
$C^1$ & \ding{51} & \ding{55} & \ding{55} & 29.8 & 22.4 \\
$C^2_1$ & \ding{51} & \ding{51} & \ding{55} & 21.7 & 24.4 \\ 
$C^2_2$ & \ding{51} & \ding{55} & \ding{51} & 19.8 & 25.5 \\
$C^3$ & \ding{51} & \ding{51} & \ding{51} & \textbf{19.6} & \textbf{26.0} \\
\bottomrule
\end{tabular}
}
\vspace{-1em}
\caption{\emph{Language-free text-to-image generation results.} Our method $C^3$ consistently outperforms the baselines.}
\label{tab:t2i}
\end{wraptable}

We further apply our method, $C^3$, to text-to-image generation, the reverse of image captioning. We utilize LAFITE~\citep{zhou2022lafite}, which integrates a frozen CLIP ViT-B/32~\citep{radford2021learning} as the text encoder and a modified StyleGAN2's generator~\citep{Karras2019stylegan2} as the trainable decoder. We use the MS-COCO dataset~\citep{mscoco} with LAFITE's official split, including 82K training and 40K validation images, with 5 captions each. We use the standard metrics such as FID (Fréchet Inception Distance)~\citep{heusel2017gans} and IS (Inception Score)~\citep{salimans2016improved} to assess the realism of generated images. Similar to the image captioning setting, we first train the model for image reconstruction using images only, and then evaluate the model in the cross-modal setting by generating images from text. We show the \emph{language-free zero-shot} image generation results in Table~\ref{tab:t2i}. Similar to image captioning, we find that our method \textbf{$C^3$ consistently outperforms the baselines} in terms of FID and IS. Our ablation study further reveals that each component of $C^3$ is useful for improving the performance. More detailed setups and qualitative comparisons can be found in Appendix~\ref{sec:appendix_t2i}.

\subsection{Generalization to Other Modalities and Embedding Spaces}

To verify the generalization of our method to other modalities, datasets, and embedding spaces, we further conduct experiments on zero-shot captioning from \emph{image}, \emph{audio} and \emph{video} using \emph{ImageBind}~\citep{girdhar2023imagebind} embeddings. We use the same settings as image captioning with CLIP. Results are shown in Table~\ref{tab:result_generalization}. We find that \textbf{$C^3$ consistently improves baselines in all the settings}, and \textbf{using ImageBind embeddings achieves further improvements in image captioning compared to CLIP embeddings}.

\begin{table}[!tb]
\small
\centering
\setlength\tabcolsep{2pt}
\resizebox{\columnwidth}{!}{
\begin{tabular}{c|ccc|ccc|ccc}
\toprule
\multirow{2}{*}{} & \multicolumn{3}{c|}{\textbf{Image Captioning (MS-COCO~\citeyear{mscoco})}} & \multicolumn{3}{c|}{\textbf{Audio Captioning (Clotho~\citeyear{clotho})}} & \multicolumn{3}{c}{\textbf{Video Captioning (MSR-VTT~\citeyear{xu2016msr-vtt})}} \\
 & \textbf{BLEU-1$_\uparrow$} &\textbf{METEOR$_\uparrow$} & \textbf{ROUGE-L$_\uparrow$} & \textbf{BLEU-1$_\uparrow$} &\textbf{METEOR$_\uparrow$} & \textbf{ROUGE-L$_\uparrow$} & \textbf{BLEU-1$_\uparrow$} &\textbf{METEOR$_\uparrow$} & \textbf{ROUGE-L$_\uparrow$} \\
\midrule
$C^1$ & 33.5 & 12.9 & 25.8 & 21.8 & 17.3 & 18.1 & 16.7 & 12.6 & 15.2 \\
$C^2_1$ & 53.8 & 17.4 & 38.6 & 26.7 & 20.0 & 21.4 & 25.1 & 17.8 & 23.1 \\
$C^2_2$ & 64.4 & 22.2 & 45.8 & 27.6 & 20.0 & 20.6 & 25.2 & 18.1 & 23.8 \\
$C^3$ & \textbf{74.0} & \textbf{26.6} & \textbf{54.0} & \textbf{29.5} & \textbf{20.1} & \textbf{23.0}  & \textbf{31.4} & \textbf{20.0} & \textbf{26.9} \\
\bottomrule
\end{tabular}
}
\vspace{-1em}
\caption{\emph{Generalization of $C^3$ to other modalities, datasets, and contrastive embedding spaces.}}
\vspace{-3.3mm}
\label{tab:result_generalization}
\end{table}

\section{Related Works \small{(Full Version in Appendix~\ref{sec:related_works})}}

\paragraph{Multi-modal contrastive learning and resulting geometry.} 

Multi-modal contrastive learning aims to bridge representations from different modalities, drawing similar concepts closer and distancing dissimilar ones~\citep{radford2021learning}. CLIP~\citep{radford2021learning}, ImageBind~\citep{girdhar2023imagebind}, and similar models have leveraged extensive multi-modal data to construct such representation spaces, which have been demonstrated to effectively support a range of uni-modal and multi-modal applications~\citep{wortsman2022robust,ramesh2022hierarchical,shen2022how}. However, the resulting geometry in the shared representation space, particularly the ``modality gap'', where embeddings from different modalities are clearly separate in the shared representation space, remains under-explored~\citep{liang2022mind,zhang2023diagnosing}. In our work, we unify the observations from~\citet{liang2022mind} and~\citet{zhang2023diagnosing}, and contribute a formal formulation and theoretical explanation of the unique geometry resulting from multi-modal contrastive learning. 

\paragraph{Learning cross-modal tasks with uni-modal data.}

Given the expense of multi-modal data collection, there is a growing interest in learning cross-modal tasks using uni-modal data. Based on the assumption that contrastive optimization makes representations from different modalities interchangeable, recent works have leveraged these representation spaces and shown great success in building image captioning models with text data only~\citep{tam2023simple,li2023decap,nukrai-etal-2022-text} and text-to-image generation models with image data only~\citep{zhou2022lafite,zhou2022lafite2,zhou2022shifted}. 
Despite these advancements, these methods have noted the intriguing ``modality gap'' phenomenon and proposed different empirical methods to address this gap, such as ~\citep{tam2023simple}'s paraphrased decoding,~\citep{li2023decap,zhou2022lafite2}'s memory retrieval,~\citep{nukrai-etal-2022-text,zhou2022lafite}'s noise addition, and~\citep{zhou2022shifted}'s prior network. In our work, we first provide a theoretical analysis of the multi-modal representation space geometry. Based on the geometry, we propose a simple method that addresses the ``modality gap'' in a principled manner and ultimately improves performance on cross-modal tasks and outperforms these strategies.

\section{Conclusion}
\label{sec:conclusion}

In this work, we provide a theoretical explanation of the unique geometry that arises from multi-modal contrastive learning. Building upon this, we present a straightforward technique, $C^3$, which enhances the interchangeability of embeddings between modalities, enabling the creation of cross-modal applications using only uni-modal data. We demonstrate the effectiveness of our approach on image, audio, video captioning and text-to-image generation, achieving state-of-the-art performance on zero-shot evaluation settings when trained solely on uni-modal data.

\section*{Acknowledgments}
We thank all the reviewers for their constructive feedback.
Serena Yeung-Levy is a Chan Zuckerberg Biohub – San Francisco Investigator.

\section*{Ethics Statement}
Drawing from a deep understanding of the multi-modal contrastive representation space, our method enables the effortless creation of multi-modal content from uni-modal data. While this has the potential to revolutionize content generation, the risks associated with generation, such as creating harmful content, persists. We strongly emphasize the importance of ethically and responsibly employing any advancements built upon our approach, ensuring that their impact enhances the betterment of our digital ecosystem.

\section*{Reproducibility Statement}
We provide open-source implementation of our work at \url{https://github.com/yuhui-zh15/C3}. The implementations will enable researchers to reproduce all the experiments described here and run their own analyses on additional multi-modal models and datasets.

\bibliography{iclr2024_conference}
\bibliographystyle{iclr2024_conference}

\newpage
\appendix
\section*{Overview of Appendix}
\label{sec:supp:overview}

In this appendix, we supplement related works and additional details of theory and experiments.

\begin{itemize}
    \item In Appendix~\ref{sec:related_works}, we provide detailed related works to contextualize our work in existing works of multi-modal contrastive learning and learning cross-modal tasks with uni-modal data.
    \item In Appendix~\ref{sec:appendix_proof}, we provide proofs of the two lemmas used in the main paper that reveal important properties of multi-modal contrastive learning.
    \item In Appendix~\ref{sec:empirical_verification}, we employ statistical methods to validate the proposed geometric structure on large pre-trained contrastive models.
    \item In Appendix~\ref{sec:appendix_algorithm}, we summarize the $C^3$ method into an algorithm.
    \item In Appendix~\ref{sec:appendix_i2t}, we provide additional experimental details and qualitative results of image captioning.
    \item In Appendix~\ref{sec:appendix_t2i}, we provide additional experimental details and qualitative results of text-to-image generation.
    \item In Appendix~\ref{sec:appendix_align}, we explain the importance of aligning embeddings from different modalities.
    \item In Appendix~\ref{sec:appendix_collapse_vs_corrupt}, we offer further discussions about the effectiveness of collapse vs corrupt.
    \item In Appendix~\ref{sec:appendix_collapse}, we offer further insights into dimensional collapse and connect it to the cone effect identified by~\citet{liang2022mind}.
\end{itemize}

\section{Related Works}
\label{sec:related_works}

\paragraph{Multi-modal contrastive learning and resulting geometry.} 

Multi-modal contrastive learning aims to create a shared representation for different modalities by attracting similar while repelling dissimilar concepts from different modalities during the optimization process~\citep{radford2021learning,zhang2020contrastive,xu2021videoclip,CLASP,yuan2021florence,girdhar2023imagebind}. Recent works such as CLIP~\citep{radford2021learning} have leveraged large-scale image-text data during pre-training, resulting in models that can build strong uni-modal and cross-modal applications. Connecting different modalities can be advantageous given the complementarity of different modalities. For example, connecting vision and language enables zero-shot categorization of visual objects~\citep{radford2021learning,yuan2021florence}, explanation of model prediction errors or internal representations~\citep{eyuboglu2021domino,hernandez2022natural}, diagnosis and rectification of vision models using language by composing different concepts~\citep{zhang2023diagnosing,vendrow2023dataset}, and learning cross-modal tasks with uni-modal data.

However, the geometry resulting from multi-modal contrastive learning has received limited study. Recent work~\citep{liang2022mind} found that there is a clear distinction between embeddings from different modalities in the shared representation space, which is referred to as the modality gap. \citet{liang2022mind} correctly attributed the gap to the joint effect of model initialization and optimization. However, they did not study the geometric property of the gap and their theory cannot explain the geometry as well. Their theory can only show there is a distributional difference between image and text embeddings. Subsequently, \citet{zhang2023diagnosing} studied the geometric properties of the modality gap and found that the gap can be empirically well approximated by a constant vector orthogonal to the image or text embedding subspace. However, this work did not provide an explanation for how this unique geometry arises.

In our work, we unify the observations from \citet{liang2022mind} and \citet{zhang2023diagnosing}, and contribute a formal formulation and theoretical explanation of the unique geometry resulting from multi-modal contrastive learning. Specifically, we show two important factors of the geometry of the multi-modal representation space: modality gap and alignment noise, where the modality gap is a constant vector orthogonal to the image and text embedding span, and the alignment noise can be approximated by a Gaussian distribution. The modality gap arises due to the interplay between the dimensional collapse in model initialization and the resulting collapsed gradient during optimization, whereas the alignment noise is due to the stable region of contrastive loss. These explanations provide a deeper understanding of multi-modal contrastive learning and resulting geometry. Moreover, this non-trivial geometry has important implications for building applications on top of a multi-modal representation space.

\paragraph{Learning cross-modal tasks with uni-modal data.}

Multi-modal data are less abundant and more expensive to collect than uni-modal data, making it ideal to learn cross-modal tasks with uni-modal data. The recent rise of multi-modal contrastive learning provides this possibility, as similar concepts from different modalities establish close connections during the optimization process. Many recent works have leveraged CLIP, which aligns images and text to achieve image-to-text captioning with text-only data and text-to-image generation with image-only data. These methods assume and explore how to interchangeably use image embeddings and text embeddings resulting from multi-modal contrastive learning. They achieve better performance than prior methods that did not leverage a multi-modal representation space, outperforming ZeroCap~\citep{tewel2022zerocap}, MAGIC~\citep{su2022language}, and ESPER~\citep{yu2022multimodal} on captioning, and DALL-E~\citep{ramesh2021zero} and CogView~\citep{ding2021cogview} on image generation. However, due to the peculiar geometry that arises from multi-modal contrastive learning, paired image embeddings and text embeddings are not collapsed to the same point, making it non-trivial to substitute one for the other. Therefore, researchers are proposing different methods to tackle this problem.

For image-to-text captioning, WS-ClipCap~\citep{li2023decap} was the first work to leverage CLIP's text embeddings to decode text during training, finding that training to decode a paraphrased version of the text from the corresponding CLIP's text embedding leads to significantly better performance (WS-ClipCap-Multi). That is, for datasets where a single image is paired with multiple captions, it is better to decode a caption by feeding the text embedding obtained from one of the other captions corresponding to the same image. Despite proposing the method, WS-ClipCap failed to explain why it works and that on a high level, this paraphrased decoding can be viewed as adding noise. Decap~\citep{li2023decap} maintains a memory of image embeddings and converts image embeddings to text embeddings by using a weighted average of the most similar image embeddings. They then feed the decoder with the converted embedding to decode the same text. While their method outperforms the baseline CLIPRe, which directly retrieves the most similar captions based on image embeddings, it underperforms WS-ClipCap-Multi, despite proposing a more complex method to tackle the modality gap. CapDec~\citep{nukrai-etal-2022-text} found that directly adding Gaussian noise to the embedding and then feeding it into the decoder leads to comparably better performance. This method corresponds exactly to the corrupt stage in our method. CapDec attributes its success to the hypothesis that adding Gaussian noise closes the modality gap, however, this intuition is inaccurate.

For text-to-image generation, LAFITE~\citep{zhou2022lafite} was the first work to use CLIP with uni-modal images only. Their approach can be seen as the inverse version of CapDec, where they add Gaussian noise to the image embedding and decode the same image. They also incorrectly suggest that adding Gaussian noise closes the modality gap. In contrast, DALLE-2~\citep{ramesh2022hierarchical} uses a complex and heavily trained prior network to convert CLIP text embeddings to CLIP image embeddings but did not provide clear evidence of why the prior network is necessary and its effectiveness. Corgi~\citep{zhou2022shifted} modifies the prior network by restricting its starting point, resulting in improved performance.

Despite the plethora of complex tricks proposed, there is no consensus on the best approach to interchangeably use image and text embeddings, and none of them reveal fundamental reasons or perform systematic ablations. 
Our work unifies previous approaches by first analyzing the multi-modal representation space and how it arises. Based on the geometric relation, we provide a straightforward method to tackle embedding interchangeability and demonstrate that our method achieves state-of-the-art results on cross-modal tasks when only training on uni-modal data. 

\section{Theory Proof}
\label{sec:appendix_proof}

In this section, we provide proofs of the two lemmas used in the main paper that reveal important properties of multi-modal contrastive learning.

\subsection{Proof of Lemma 1}
\label{sec:prop_gradient}

\begin{lemma}
\textup{\textbf{(Gradients in Contrastive Optimization)}}\\
With the mild assumption of equal presence of $n$ images and texts with $p(x_i)=p(y_i)=1/n$, optimizing the multi-modal contrastive loss $\gL = -\frac{1}{2n} \sum_{i=1}^n \big(\log \frac{\exp(\ve_{x_i} \cdot \ve_{y_i} / \tau)}{\sum_{j=1}^n \exp(\ve_{x_i} \cdot \ve_{y_j}  / \tau)} + \log \frac{\exp(\ve_{x_i} \cdot \ve_{y_i} / \tau)}{\sum_{j=1}^n \exp(\ve_{x_j} \cdot \ve_{y_i}  / \tau)}\big)$ yields the following gradients:
\begin{align*}
\nabla_{\ve_{x_i}} \gL &= \lambda \sum_{j=1}^n \alpha_{y_j} (\ve_{y_j} - \ve_{y_i}) \\
\nabla_{\ve_{y_i}} \gL &= \lambda \sum_{j=1}^n \alpha_{x_j} (\ve_{x_j} - \ve_{x_i})
\end{align*}

\noindent where $\lambda=1 / (2n\tau)$, $\alpha_{x_j} = p(x_j|y_i)+p(y_i|x_j)$, 
$\alpha_{y_j} = p(y_j|x_i)+p(x_i|y_j)$, 
$p(x_i|y_j)=\frac{\exp(\ve_{x_i} \cdot \ve_{y_j} / \tau)} {\sum_{k=1}^n \exp(\ve_{x_k} \cdot \ve_{y_j} / \tau)}$, 
$p(y_i|x_j) = \frac{\exp(\ve_{y_i} \cdot \ve_{x_j} / \tau)}{\sum_{k=1}^n \exp(\ve_{y_k} \cdot \ve_{x_j} / \tau)}$, and $\tau$ is temperature.
\end{lemma}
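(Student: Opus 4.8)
The plan is to differentiate the loss directly, exploiting its split into an image-anchored term and a text-anchored term together with the standard log-softmax gradient identity. First I would write $\gL = \gL^{(1)} + \gL^{(2)}$, where $\gL^{(1)} = -\frac{1}{2n}\sum_k \log \frac{\exp(\ve_{x_k}\cdot\ve_{y_k}/\tau)}{\sum_j \exp(\ve_{x_k}\cdot\ve_{y_j}/\tau)}$ has each image embedding anchoring a softmax over the texts, and $\gL^{(2)}$ is its transpose, with each text embedding anchoring a softmax over the images. Since differentiation is linear, I would compute $\nabla_{\ve_{x_i}}\gL^{(1)}$ and $\nabla_{\ve_{x_i}}\gL^{(2)}$ separately and then add them; the $\ve_{y_i}$ gradients follow by the $x\leftrightarrow y$ symmetry of $\gL$ with no extra work.

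For $\gL^{(1)}$ the embedding $\ve_{x_i}$ appears only in the $k=i$ summand, since it anchors exactly one softmax. Differentiating $\log\frac{\exp(\ve_{x_i}\cdot\ve_{y_i}/\tau)}{\sum_j \exp(\ve_{x_i}\cdot\ve_{y_j}/\tau)}$, the numerator contributes $\ve_{y_i}/\tau$ and the log-partition contributes $-\frac{1}{\tau}\sum_j p(y_j|x_i)\ve_{y_j}$, so after the $-\frac{1}{2n}$ prefactor and using $\sum_j p(y_j|x_i)=1$ I would obtain $\nabla_{\ve_{x_i}}\gL^{(1)} = \lambda\sum_j p(y_j|x_i)(\ve_{y_j}-\ve_{y_i})$. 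The term $\gL^{(2)}$ is more delicate: here $\ve_{x_i}$ appears both in the numerator of the $k=i$ summand and, crucially, in the image-partition function $\sum_j \exp(\ve_{x_j}\cdot\ve_{y_k}/\tau)$ of \emph{every} summand. Summing all these contributions over $k$ gives $\nabla_{\ve_{x_i}}\gL^{(2)} = \lambda\big(\sum_j p(x_i|y_j)\ve_{y_j} - \ve_{y_i}\big)$.

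The key obstacle is reconciling this last expression with the claimed difference form $\lambda\sum_j p(x_i|y_j)(\ve_{y_j}-\ve_{y_i})$: the two agree only if $\sum_j p(x_i|y_j)=1$, which is not automatic, since $p(x_i|y_j)$ is a softmax over the image index $i$, so its column sums rather than its row sums equal one. This is exactly where the equal-presence assumption $p(x_i)=p(y_i)=1/n$ enters. Under uniform marginals, Bayes' rule forces the cross-conditionals to be symmetric, $p(x_i|y_j)=p(y_j|x_i)$, whence $\sum_j p(x_i|y_j) = \sum_j p(y_j|x_i) = 1$. I would invoke this to write $\ve_{y_i} = \ve_{y_i}\sum_j p(x_i|y_j)$ and absorb it into the sum, converting $\nabla_{\ve_{x_i}}\gL^{(2)}$ into $\lambda\sum_j p(x_i|y_j)(\ve_{y_j}-\ve_{y_i})$.

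Finally, adding the two gradients collects the weights into $\alpha_{y_j}=p(y_j|x_i)+p(x_i|y_j)$, yielding $\nabla_{\ve_{x_i}}\gL = \lambda\sum_j \alpha_{y_j}(\ve_{y_j}-\ve_{y_i})$, and the companion formula for $\nabla_{\ve_{y_i}}\gL$ drops out by relabeling $x\leftrightarrow y$. I expect the only genuinely subtle steps to be the bookkeeping in $\gL^{(2)}$ — correctly counting every summand whose partition function contains $\ve_{x_i}$ — and the clean identification of precisely where the equal-presence hypothesis is indispensable.
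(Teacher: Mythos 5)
Your proposal is correct and follows essentially the same route as the paper's proof: a direct chain-rule differentiation using the log-softmax gradient, with the equal-presence assumption entering exactly where you place it, namely to establish $\sum_j p(x_i|y_j)=1$ via Bayes' rule so that the text-anchored contribution can be rewritten in difference form. The only difference is organizational --- you split the loss into the two anchored halves before differentiating, whereas the paper expands both logarithms at once --- and your identification of the text-anchored partition functions as the subtle bookkeeping step matches the structure of the paper's computation.
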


\begin{proof}[Proof of Lemma~\ref{prop:gradient}]

We first prove $\forall k, \sum_{i=1}^n p(x_k|y_i) = 1$ using Bayes' theorem: 
\begin{align*}
\sum_{i=1}^n p(x_k|y_i) = \sum_{i=1}^n \frac{p(y_i|x_k)p(x_k)}{p(y_i)} = \sum_{i=1}^n \frac{p(y_i|x_k) (1/ n)} {(1/n)} = 1
\end{align*}

Then, we can prove the lemma using chain rule:
\begin{align*}
&\nabla_{\ve_{x_k}} \gL \\
&= \nabla_{\ve_{x_k}} \big[-\frac{1}{2n} \sum_{i=1}^n \big( \log \frac{\exp(\ve_{x_i} \cdot \ve_{y_i} / \tau)}{\sum_{j=1}^n \exp(\ve_{x_i} \cdot \ve_{y_j}  / \tau)} + \log \frac{\exp(\ve_{x_i} \cdot \ve_{y_i} / \tau)}{\sum_{j=1}^n \exp(\ve_{x_j} \cdot \ve_{y_i}  / \tau)}\big) \big] \\
&= -\frac{1}{2n} \nabla_{\ve_{x_k}} \big[ \sum_{i=1}^n 2 \ve_{x_i} \cdot \ve_{y_i} / \tau - \sum_{i=1}^n \log ({\sum_{j=1}^n \exp(\ve_{x_i} \cdot \ve_{y_j}  / \tau)})  - \sum_{i=1}^n \log ({\sum_{j=1}^n \exp(\ve_{x_j} \cdot \ve_{y_i}  / \tau)}) \big] \\
&= -\frac{1}{2n} \big[ 2 \ve_{y_k} / \tau -  \nabla_{\ve_{x_k}} \log ({\sum_{j=1}^n \exp(\ve_{x_k} \cdot \ve_{y_j}  / \tau)})  - \sum_{i=1}^n \nabla_{\ve_{x_k}}  \log ({\sum_{j=1}^n \exp(\ve_{x_j} \cdot \ve_{y_i}  / \tau)}) \big] \\
&= -\frac{1}{2n} \big[ 2 \ve_{y_k} / \tau -  \sum_{i=1}^n \frac{\exp(\ve_{x_k} \cdot \ve_{y_i}/ \tau)} { {\sum_{j=1}^n \exp(\ve_{x_k} \cdot \ve_{y_j}  / \tau)}} \ve_{y_i} / \tau  - \sum_{i=1}^n \frac{\exp(\ve_{x_k} \cdot \ve_{y_i}/ \tau)}{\sum_{j=1}^n \exp(\ve_{x_j} \cdot \ve_{y_i}  / \tau)} \ve_{y_i} / \tau  \big] \\
&= -\frac{1}{2n\tau} \big[ 2 \ve_{y_k}  -  \sum_{i=1}^n p(y_i|x_k) \ve_{y_i}   - \sum_{i=1}^n p(x_k|y_i) \ve_{y_i}   \big] \\
&= \frac{1}{2n\tau}  \big[ \sum_{i=1}^n p(y_i|x_k) (\ve_{y_i} - \ve_{y_k})   + \sum_{i=1}^n p(x_k|y_i) (\ve_{y_i} - \ve_{y_k})   - (1-\sum_{i=1}^n p(x_k|y_i) ) \ve_{y_k}  \big ] \\
&= \frac{1}{2n\tau}  \big[ \sum_{i=1}^n (p(y_i|x_k) + p(x_k|y_i)) (\ve_{y_i} - \ve_{y_k}) ] \\
&= \lambda \sum_{i=1}^n \alpha_{y_i} (\ve_{y_i} - \ve_{y_k})  \\
\end{align*}

Similarly, we can get $\nabla_{\ve_{y_k}} \gL = \lambda \sum_{i=1}^n \alpha_{x_i} (\ve_{x_i} - \ve_{x_k})$, which finishes the proof.
\end{proof}

\subsection{Proof of Lemma 2}
\label{sec:prop_region}

\begin{lemma}
\textup{\textbf{(Stable Region Controlled by Temperature)}} \\
We consider a single term in the multi-modal contrastive loss
$\gL_i=-\log \frac{\exp(\ve_{x_i} \cdot \ve_{y_i} / \tau)}{\sum_{j=1}^n \exp(\ve_{x_i} \cdot \ve_{y_j} / \tau)}$. We define the \emph{margin} $r = \ve_{x_i} \cdot \ve_{y_i} - \max_{j \ne i} \ve_{x_i} \cdot \ve_{y_j}$ as the measure of the similarity difference between the matched pair and the hardest negative pair. When $r$ exceeds a threshold given below, $\gL_i$ falls below a small pre-set value $\delta$, where we assume optimization ends:
\begin{align*}
r \ge \tau \log \frac{o(\tau)}{\exp (\delta) - 1},
\end{align*}
where $o(\tau)$ is a monotonically increasing function of temperature $\tau$ that satisfies $1 < o(\tau) < n$. Therefore, the required margin $r$ is monotonically increasing with $\tau$.
\end{lemma}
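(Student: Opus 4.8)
The plan is to reduce the single-term loss to a scalar inequality in the margin $r$ by exploiting the log-sum-exp structure of the softmax, and then simply invert it. Writing the pairwise similarity as $s_{ij} = \ve_{x_i} \cdot \ve_{y_j}$ and dividing numerator and denominator by $\exp(s_{ii}/\tau)$, the matched pair cancels against one term and the loss becomes
\begin{align*}
\gL_i = \log\Big(1 + \sum_{j \ne i} \exp\big((s_{ij} - s_{ii})/\tau\big)\Big).
\end{align*}
This isolates the matched pair so that the negatives enter only through the gaps $s_{ii} - s_{ij} \ge r$.

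Next I would make the margin explicit by factoring $\exp(-r/\tau)$ out of the sum and defining the \emph{effective number of competing negatives}
\begin{align*}
o(\tau) := \sum_{j \ne i} \exp\big((s_{ij} - s_{ii} + r)/\tau\big),
\end{align*}
so that the exact identity $\gL_i = \log\big(1 + o(\tau)\exp(-r/\tau)\big)$ holds. Each exponent equals $r - (s_{ii} - s_{ij}) \le 0$, with equality attained by the hardest negative; hence every one of the $n-1$ summands lies in $(0,1]$ and at least one equals $1$, giving $1 < o(\tau) \le n-1 < n$ (the lower bound being strict whenever more than one negative contributes). For monotonicity in $\tau$ I would set $a_j := r - (s_{ii}-s_{ij}) \le 0$ and differentiate $o(\tau)=\sum_j e^{a_j/\tau}$, obtaining $o'(\tau) = -\tau^{-2}\sum_j a_j e^{a_j/\tau} \ge 0$ since every $a_j \le 0$.

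With the exact identity in hand, the stated threshold is immediate: the condition $\gL_i \le \delta$ for optimization to halt is equivalent to $o(\tau)\exp(-r/\tau) \le e^\delta - 1$, i.e.
\begin{align*}
r \ge \tau \log \frac{o(\tau)}{e^\delta - 1},
\end{align*}
which is precisely the claimed bound, with the $o(\tau)$ just constructed playing the role of the function in the statement.

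The main obstacle is the final assertion that the required margin is monotonically increasing in $\tau$, because the threshold $T(\tau) = \tau\,[\log o(\tau) - \log(e^\delta - 1)]$ contains $o(\tau)$, which itself varies with $\tau$. I would differentiate and show each term of
\begin{align*}
T'(\tau) = \log o(\tau) - \log(e^\delta - 1) + \tau\,\frac{o'(\tau)}{o(\tau)}
\end{align*}
is non-negative: $\log o(\tau) > 0$ because $o(\tau) > 1$; the middle term is positive precisely when $\delta$ is small enough that $e^\delta - 1 < 1$; and the last term is non-negative by the monotonicity of $o$ established above. This smallness-of-$\delta$ hypothesis is exactly what prevents the bounded logarithmic factor from overwhelming the linear-in-$\tau$ prefactor, and it is the point where the argument is genuinely delicate rather than a routine inversion.
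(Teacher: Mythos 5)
Your proof is correct and follows essentially the same route as the paper: you rewrite $\gL_i=\log\bigl(1+\sum_{j\ne i}\exp((s_{ij}-s_{ii})/\tau)\bigr)$ and pull out the hardest negative, and your $o(\tau)$ coincides exactly with the paper's intermediate quantity $o'(\tau)$ before the paper (unnecessarily) rounds it up with a ceiling, so you get the bound as an identity rather than an inequality. The one substantive addition is that you actually verify the closing claim that the threshold $\tau\log\frac{o(\tau)}{e^{\delta}-1}$ is increasing in $\tau$ by differentiating it, correctly isolating the hidden hypothesis $e^{\delta}-1<1$ (i.e.\ $\delta<\log 2$) that the paper asserts without proof.
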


\begin{proof}[Proof of Lemma~\ref{prop:region}]

We first prove $\sum_i \exp(t_i / \tau) \le o(\tau) \exp( \max_i t_i / \tau)$, where $o(\tau)$ is a monotonically increasing function of $\tau$ that satisfies $1 < o(\tau) < n$. Let us denote $m = \arg\max_i t_i$ (no tie), we have:
\begin{align*}
& \sum_i \exp(t_i / \tau) = \exp( t_m / \tau) (1+\sum_{i\ne m} \exp((t_i - t_m) / \tau))
\end{align*}
Let us denote $o'(\tau) = 1+\sum_{i\ne m} \exp((t_i - t_m) / \tau)$, since $\forall i \ne m, t_i - t_m < 0$, we have $0 < \exp((t_i - t_m) / \tau) < 1$, so $0 < \sum_{i\ne m} \exp((t_i - t_m) / \tau) < n-1$, therefore $1 < o'(\tau) < n$. Moreover, $\exp((t_i - t_m) / \tau)$ monotonically increases with $\tau$, therefore $o'(\tau)$ is a monotonically increasing function of $\tau$. We can denote $o(\tau) = \ceil{o'(\tau)}$.

Based on this, we have:
\begin{align*}
& \gL_i \\
&=-\log \frac{\exp(\ve_{x_i} \cdot \ve_{y_i} / \tau)}{\sum_{j=1}^n \exp(\ve_{x_i} \cdot \ve_{y_j} / \tau)} \\
&= -\log \frac{1}{1 + \sum_{j\ne i} \exp((\ve_{x_i} \cdot \ve_{y_j} - \ve_{x_i} \cdot \ve_{y_i}) / \tau)} \\
&= \log (1 + \sum_{j\ne i} \exp((\ve_{x_i} \cdot \ve_{y_j} - \ve_{x_i} \cdot \ve_{y_i}) / \tau)) \\
&\le \log (1 + o(\tau) \max_{j\ne i} \exp((\ve_{x_i} \cdot \ve_{y_j} - \ve_{x_i} \cdot \ve_{y_i}) / \tau)) \\
&= \log (1 + o(\tau) \exp(-r / \tau))
\end{align*}
Suppose $\gL_i \le \log (1 + o(\tau) \exp(-r / \tau)) \le \delta$, we have $r \ge \tau \log \frac{o(\tau)}{\exp (\delta) - 1}$, which finishes the proof.
\end{proof}

\section{Empirical Verification of Geometry}
\label{sec:empirical_verification}

In Section~\ref{sec:theory}, we established a theoretical framework for the multi-modal contrastive representation space geometry. We prove that after contrastive learning, we have $\ve_x - \ve_y = \vc_\perp + \vepsilon$, where $\ve_x$ and $\ve_y$ are $\ell_2$-normalized embeddings of a paired image $x$ and text $y$, $\vc_\perp$ is a constant vector representing the \emph{modality gap} and is orthogonal to the image and text embedding span, and $\vepsilon \sim \gN(\bm{0}, \sigma^2 \bm{I})$ is a random Gaussian vector representing the \emph{alignment noise}. Here we employ statistical methods to validate the proposed geometric structure on large pre-trained contrastive models. Figure~\ref{fig:gap_definition} visualizes all the definitions introduced in this section.

To analyze the modality gap $\vc_\perp$, we need to first find a way to isolate this vector, because it is entangled with alignment noise $\vepsilon$ in $\ve_x - \ve_y$. Since $\vepsilon$ is Gaussian, averaging multiple instances of $\ve_x - \ve_y$ should neutralize this noise, and thus isolating $\vc_\perp$. Therefore, we randomly group every 100 image-text pairs into a group $i$. We define embedding difference for pair $j$ in group $i$ as $\vd_j^{(i)} = \ve_{x_j}^{(i)} - \ve_{y_j}^{(i)}$. This difference includes both the modality gap and alignment noise. By computing the expected value, $\vd^{(i)} = \E_j [\vd_j^{(i)}]$, we effectively eliminate the noise component and leave the modality gap.

To demonstrate that the modality gap is a constant vector, we analyze its magnitude and direction across different groups. The distribution of $\|\vd^{(i)}\|_2$ (Figure~\ref{fig:gap_result} (Gap Length)) shows that the gap has a near constant length of 0.83, while the 0.99 mean of $\cos(\vd^{(i)}, \vd^{(j)})$ (Figure~\ref{fig:gap_result} (Gap Direction)) confirms that the gap has the same direction. These findings collectively validate that the modality gap is a constant vector.

Next, we establish the orthogonality of the modality gap to the embedding spans. By demonstrating its orthogonality to the image embedding span, we implicitly confirm its orthogonality to the text embedding span, given that the modality gap is a constant vector. We define the embedding difference of image $j$ and $k$ as $\vr_{j,k}^{(i)} = \ve_{x_j}^{(i)} - \ve_{x_k}^{(i)}$. We observe the distribution of $\cos(\vd^{(i)},\vr_{j,k}^{(i)})$ shown in Figure~\ref{fig:gap_result} (Gap Orthogonality) has zero mean with a small standard deviation 0.06, which demonstrates the gap's orthogonality to embedding spans.

Finally, we address the alignment noise. We define $\vepsilon_j^{(i)} = \vd_j^{(i)} - \vd^{(i)}$, which eliminates the gap and only leaves the noise. We first demonstrate its zero-mean nature $\E_{i,j} [\vepsilon_j^{(i)}] = \bm{0}$ through the distribution of each dimension's mean. We see the mean of each dimension of the noise vectors is bounded between -1e-8 and 1e-8  (Figure~\ref{fig:gap_result} (Noise Mean)).
Furthermore, we show in Figure~\ref{fig:gap_result} (Noise Direction) that the distribution of $\cos(\vepsilon_{j}^{(i)}, \vepsilon_{k}^{(i)})$ has zero mean and small standard deviation 0.10, indicating that noise has random directions as every two noises are likely to be orthogonal. These findings affirm that the alignment noise can be approximated by a Gaussian noise $\gN(\bm{0}, \sigma^2 \bm{I})$.

In summary, our empirical analyses support the proposed geometric model of the multi-modal contrastive representation space, with a constant modality gap vector orthogonal to embedding spans and an alignment noise characterizable as zero-mean Gaussian.

In Figure~\ref{fig:gap_result_more}, we show that our analyses above fully apply to other modalities, datasets, and contrastive embedding spaces, such as image-caption (MS-COCO), audio-caption (Clotho), and video-caption (MSR-VTT) on ImageBind embeddings.

\begin{figure}[htbp]
    \centering
    \includegraphics[width=\linewidth]{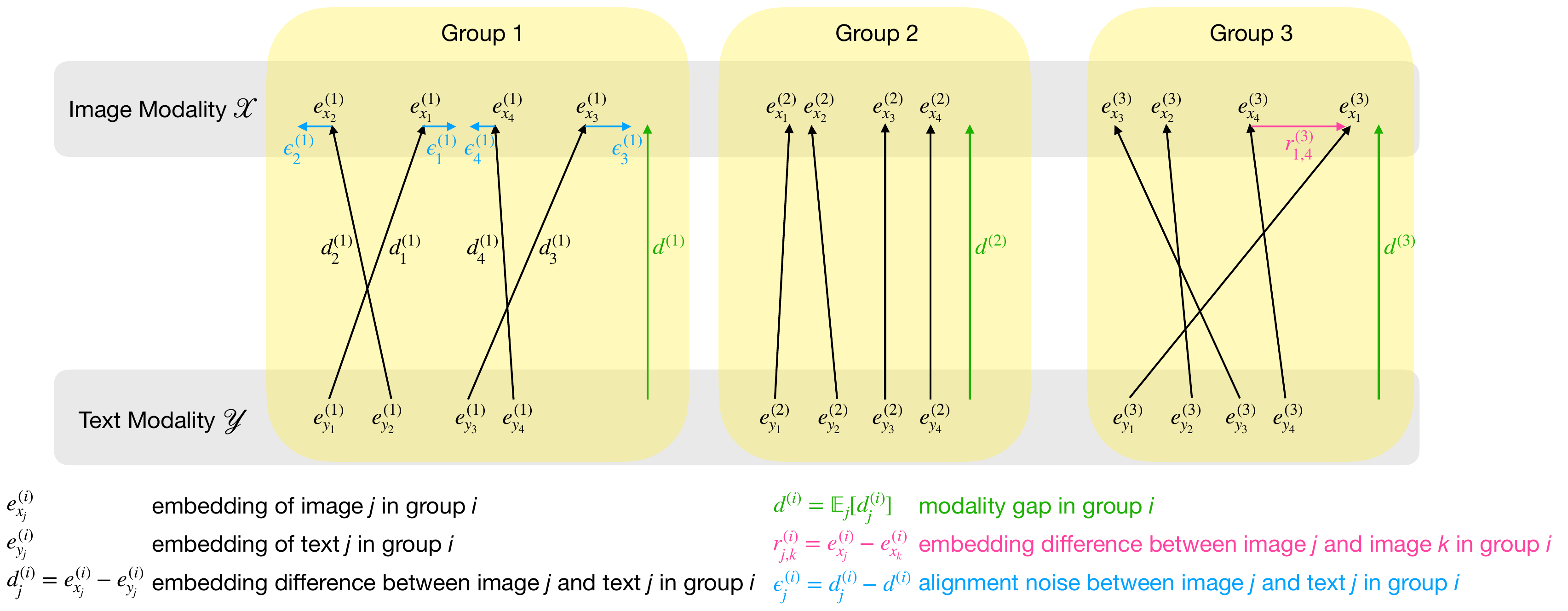}
    \vspace{-2em}
    \caption{\emph{Visualization of the multi-modal contrastive representation space and various definitions introduced in Appendix~\ref{sec:empirical_verification}.}}
    \label{fig:gap_definition}
\end{figure}

\begin{figure}[htbp]
    \centering
    \includegraphics[width=\linewidth]{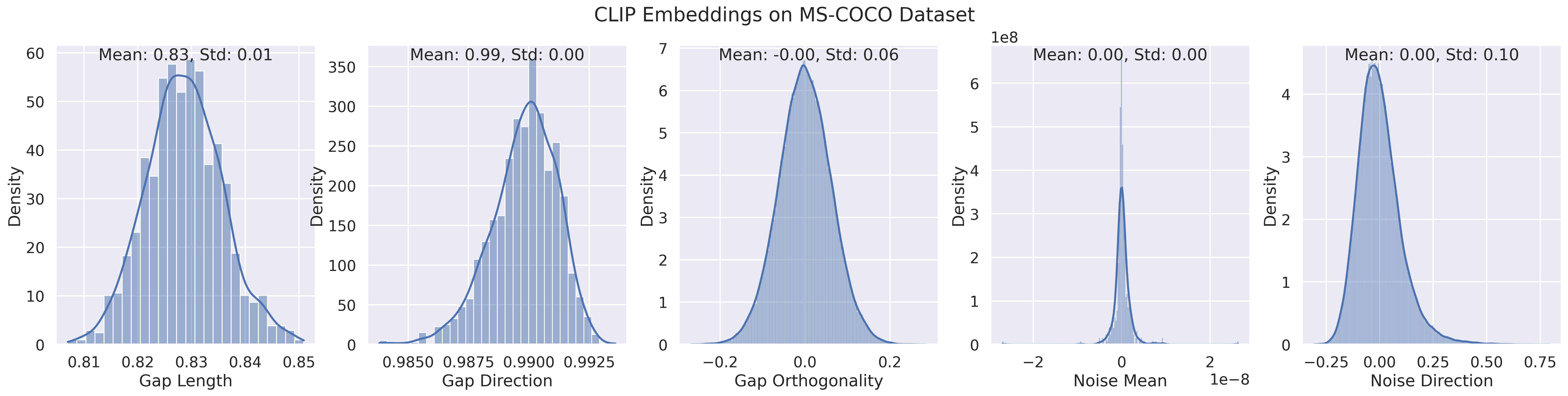}
    \vspace{-2em}
    \caption{\emph{Empirical verification of the multi-modal contrastive representation space geometry.} The modality gap approximates a constant vector, indicated by the gap length and direction distributions. The modality gap is orthogonal to the span of embeddings from two modalities, indicated by the gap orthogonality distributions. The alignment noise can be approximated by zero-mean Gaussian, indicated by the noise mean and direction distributions.}
    \label{fig:gap_result}
\end{figure}

\begin{figure}[htbp]
    \centering
    \includegraphics[width=\linewidth]{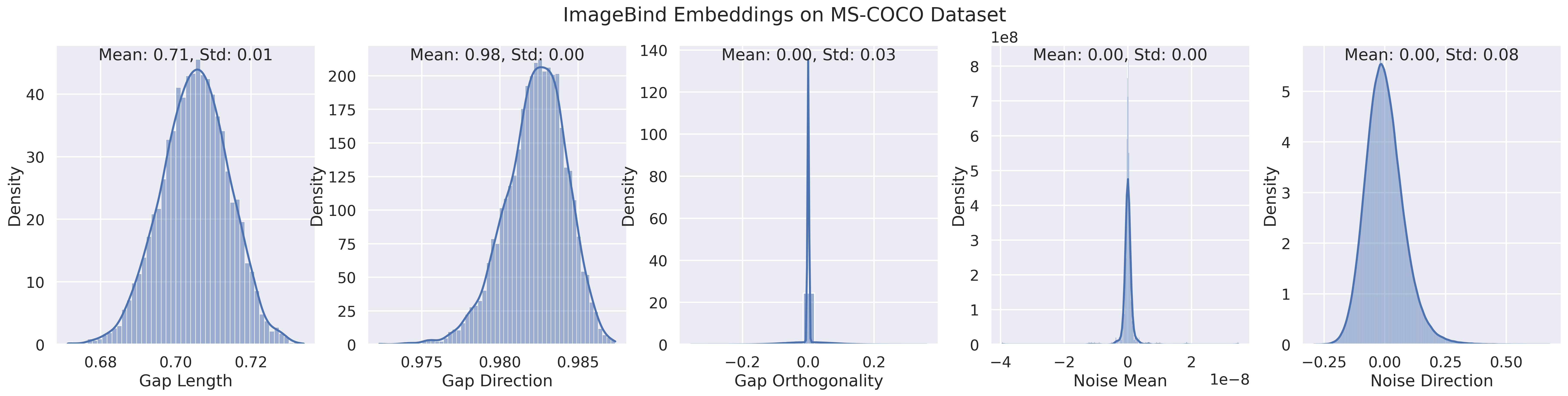}
    \includegraphics[width=\linewidth]{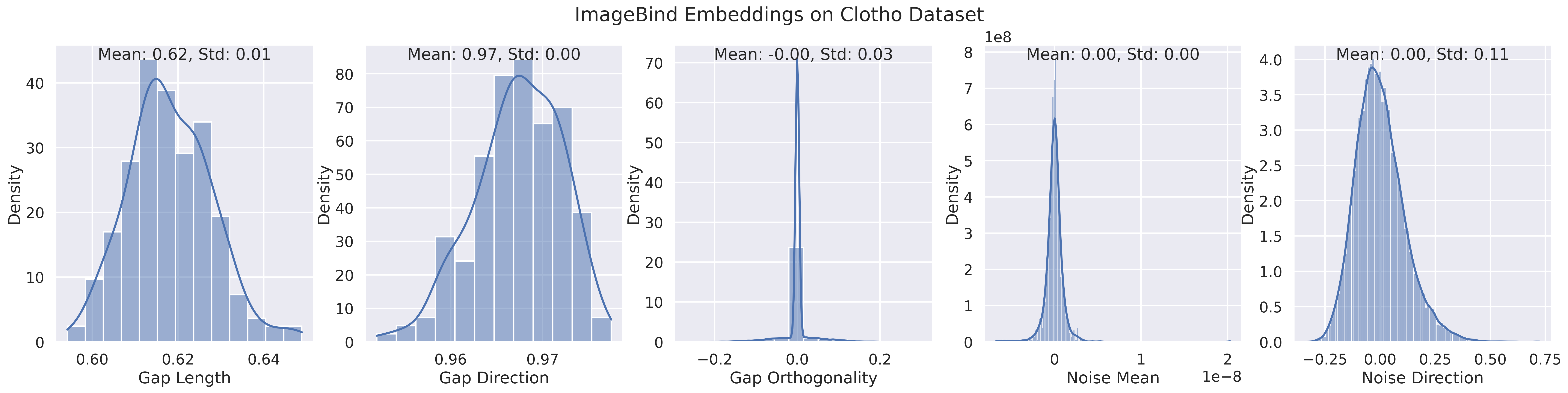}
    \includegraphics[width=\linewidth]{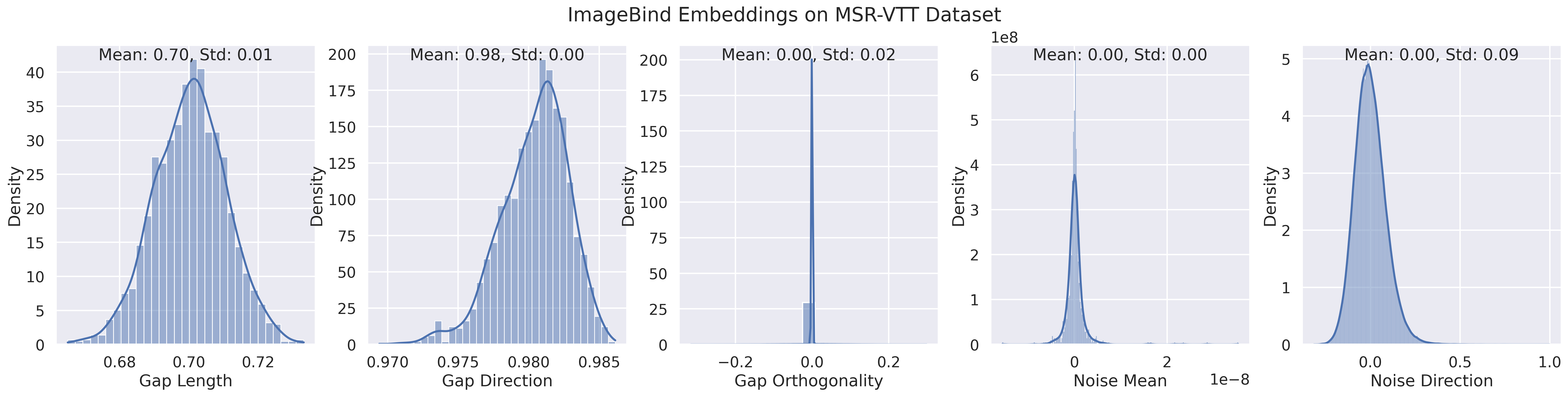}
    \vspace{-2em}
    \caption{\emph{Generalization of Figure~\ref{fig:gap_result} to other modalities and multi-modal contrastive embedding spaces, including image-caption (MS-COCO), audio-caption (Clotho), and video-caption (MSR-VTT) on ImageBind embeddings.}}
    \label{fig:gap_result_more}
\end{figure}

\section{\texorpdfstring{$C^3$}{C3} Algorithm}
\label{sec:appendix_algorithm}

In this section, we summarize the $C^3$ algorithm as follows:

\begin{algorithm}
\caption{$C^3$ algorithm}
\label{alg:example}
\begin{algorithmic}[1]
\Require unpaired uni-modal dataset $\gX$, $\gY$, fixed encoders $f_\gX: \gX \mapsto \R^d$, $f_\gY: \gY \mapsto \R^d$ obtained from multi-modal contrastive learning, trainable decoder $g: \R^d \mapsto \gY$, noise level $\sigma$

\State $\bar{\ve}_x \gets \sum_{x \in \gX} \frac{1}{|\gX|} f_\gX(x)$
\State $\bar{\ve}_y \gets \sum_{y \in \gY} \frac{1}{|\gY|} f_\gY(y)$

\\

\Function{train}{$y$, $f_\gY$, $g$}
\State $\ve_y \gets f_\gY(y)$
\State $\vepsilon \gets \gN(\bm{0}, \sigma^2 \bm{I})$
\State $\hat{y} \gets g(\ve_y - \bar{\ve}_y + \vepsilon)$
\State \Return $\gL(\hat{y}, y)$
\EndFunction

\\
\Function{test}{$x$, $f_\gX$, $g$}
\State $\ve_x \gets f_\gX(x)$
\State $\hat{y} \gets g(\ve_x - \bar{\ve}_x)$
\State \Return $\hat{y}$
\EndFunction

\end{algorithmic}
\label{alg:c3}
\end{algorithm}

\section{Image Captioning}
\label{sec:appendix_i2t}

In this section, we provide additional experimental details and qualitative results of image captioning.

\subsection{Experimental Setup}

\paragraph{Model.} We employ the ClipCap model architecture~\citep{mokady2021clipcap}, which utilizes the CLIP ViT-B/32~\citep{radford2021learning} as the image encoder $f_\gX$, and a mapping network with a pre-trained GPT-2~\citep{radford2019language} as the decoder $g$. The mapping network is designed to handle the difference in embedding dimensions between CLIP (512-$d$) and GPT-2 (768-$d$) and to generate a ``prefix'' as input to GPT-2. It is implemented as a lightweight MLP with a single hidden layer, which transforms the CLIP embedding into prefix embeddings for GPT-2 to generate captions. During training, we fix the CLIP encoder to maintain the connection between image and text embeddings and only update the decoder, which includes the mapping network and GPT-2.

\paragraph{Data.} To train and evaluate our model, we use the MS-COCO image-caption dataset~\citep{mscoco}. We adopt the widely-used data split~\citep{karpathysplit}, which consists of a training set of approximately 113K images, and validation and test sets of 5K images each, where each image has 5 ground truth captions.

\paragraph{Evaluation.} We evaluate our model using various commonly-used image captioning metrics, including BLEU~\citep{papineni2002bleu}, METEOR~\citep{banerjee2005meteor}, ROUGE~\citep{lin2004rouge}, CIDEr~\citep{vedantam2015cider}, and SPICE~\citep{anderson2016spice}. These metrics measure the lexical and semantic similarities between the generated captions and the ground-truth captions.

\paragraph{Setup.} We train our model for text reconstruction using the MS-COCO captions only. Following the $C^3$ algorithm, we extract the text embedding from the CLIP text encoder $f_\gY$ and apply the collapse operation (removing the pre-computed mean) and corrupt operation (adding Gaussian noise). After pre-training, we evaluate our model in the cross-modal setting. We replace the encoder with the CLIP image encoder and decode captions from image embeddings. Since no image is seen during pre-training, we refer to this evaluation setting as \emph{image-free zero-shot} evaluation. Additionally, we fine-tune our model on different amounts of image-caption pairs and evaluate its performance. We refer to this evaluation setting as \emph{semi-supervised} evaluation. During both the pre-training and fine-tuning stages, we train the model for 10 epochs with a batch size of 40, a learning rate of 2e-5, and AdamW~\citep{adamw} optimizer with a linear warmup of 5K steps. We use early stopping on the validation set and report the test set performance.

\subsection{Qualitative Examples}

\begin{figure*}[htbp]
    \centering
    \includegraphics[width=0.96\linewidth]{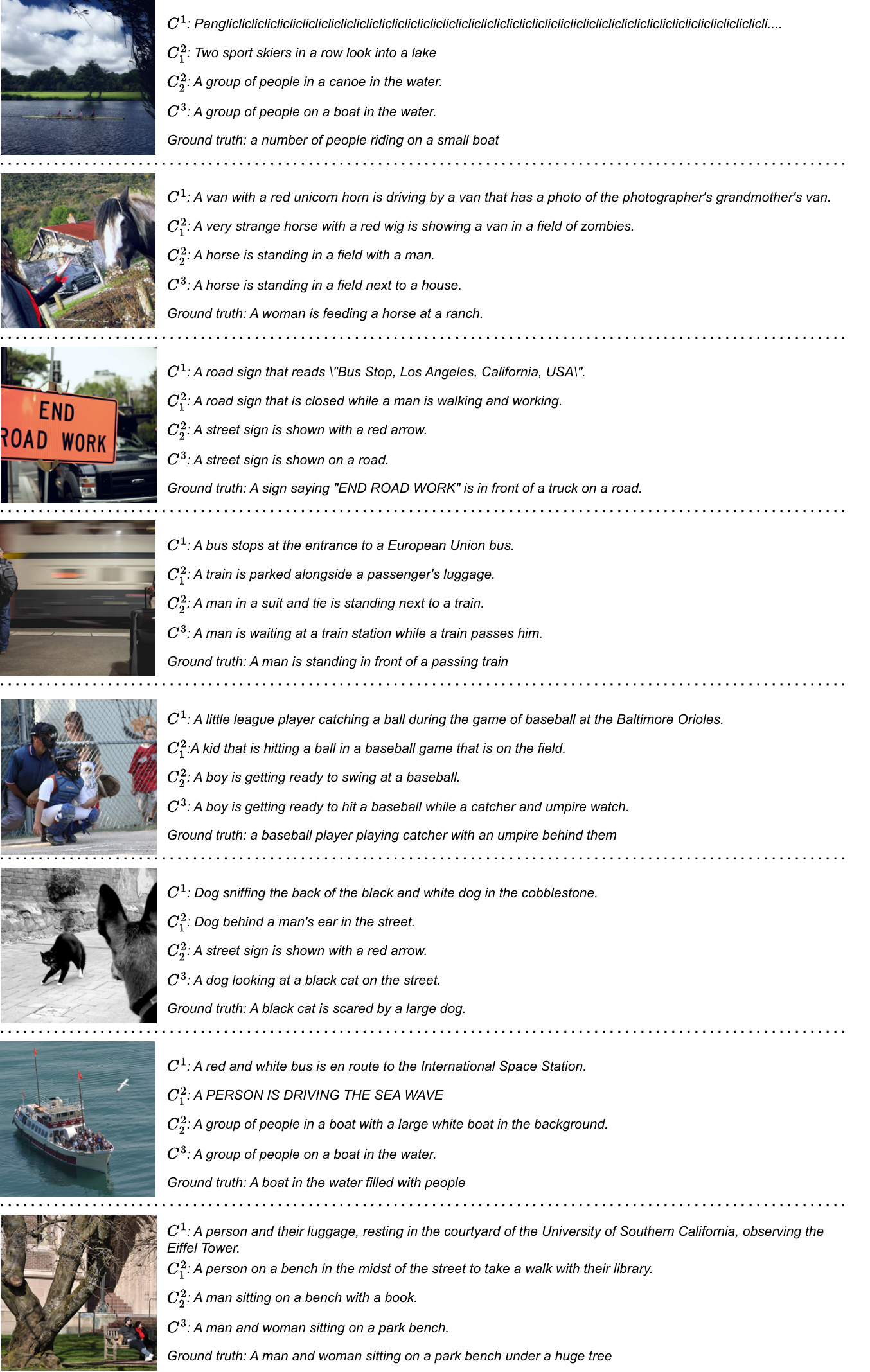}
    \vspace{-1em}
    \caption{\emph{Qualitative examples of image-to-text captioning on the MS-COCO dataset.} $C^1$ generates captions that are highly repetitive and/or nonsensical. $C^2_1$ and $C^2_2$ generates captions that are more fluent, but contain some hallucinations. $C^3$ generates captions that are more correct and concise with no extraneous details. }
    \label{fig:i2t}
\end{figure*}

We provide qualitative results for image captioning in Appendix Figure~\ref{fig:i2t}, which helps us better understand the improvements of each component of $C^3$. We observe that:
\begin{itemize}
    \item $C^1$ generates captions that are \textbf{highly repetitive and/or nonsensical}.
    \item $C^2_1$ and $C^2_2$ generates captions that are \textbf{more fluent, but contain some hallucinations}.
    \item $C^3$ generates captions that are \textbf{more correct and concise with no extraneous details}.
\end{itemize}

\subsection{Quantitative Results}

\begin{table*}[htbp]
\centering
\scriptsize

\begin{tabular}{l|ccc|cccccc}
\toprule

\textbf{Method} & \textbf{Connect} & \textbf{Collapse} & \textbf{Corrupted} & \textbf{BLEU-1} & \textbf{BLEU-4} & \textbf{METEOR} & \textbf{ROUGE-L} & \textbf{CIDEr} & \textbf{SPICE} \\
\midrule
\multicolumn{10}{c}{1\% Fine-tuning} \\
ClipCap & - & - & - & 64.5 & 21.2 & 21.2 & 47.6 & 71.1 & 14.7 \\ 
$C^1$ & \ding{51} & \ding{55} & \ding{55} & 66.5 & 22.9 & 22.6 & 48.7 & 78.5 & 15.9 \\
$C^2_1$ & \ding{51} & \ding{51} & \ding{55} & 66.8 & 23.0 & 22.6 & 48.7 & 78.5 & 15.9 \\ 
$C^2_2$ & \ding{51} & \ding{55} & \ding{51} & 71.7 & 28.8 & 25.3 & 52.7 & 96.3 & 18.3 \\
$C^3$ (Ours) & \ding{51} & \ding{51} & \ding{51} & 71.9 & 28.9 & 25.3 & 52.7 & 96.4 & 18.3 \\
\midrule

\multicolumn{10}{c}{5\% Fine-tuning} \\
ClipCap & - & - & - & 70.1 & 26.4 & 24.0 & 51.3 & 89.4 & 17.3 \\ 
$C^1$ & \ding{51} & \ding{55} & \ding{55} & 70.6 & 27.0 & 24.5 & 51.6 & 91.6 & 17.8 \\
$C^2_1$ & \ding{51} & \ding{51} & \ding{55} & 70.7 & 27.1 & 24.6 & 51.7 & 92.1 & 17.9 \\ 
$C^2_2$ & \ding{51} & \ding{55} & \ding{51} & 72.1 & 29.3 & 25.6 & 53.1 & 98.2 & 18.7 \\
$C^3$ (Ours) & \ding{51} & \ding{51} & \ding{51} & 72.3 & 29.4 & 25.7 & 53.1 & 98.8 & 18.8 \\
\midrule

\multicolumn{10}{c}{25\% Fine-tuning} \\
ClipCap & - & - & - & 73.0 & 30.0 & 26.0 & 53.7 & 101.7 & 19.1 \\ 
$C^1$ & \ding{51} & \ding{55} & \ding{55} & 73.2 & 30.4 & 26.1 & 53.9 & 102.8 & 19.4 \\
$C^2_1$ & \ding{51} & \ding{51} & \ding{55} & 73.1 & 30.2 & 26.1 & 53.8 & 102.4 & 19.3 \\ 
$C^2_2$ & \ding{51} & \ding{55} & \ding{51} & 73.2 & 30.9 & 26.3 & 54.2 & 103.1 & 19.3 \\
$C^3$ (Ours) & \ding{51} & \ding{51} & \ding{51} & 73.4 & 31.1 & 26.3 & 54.3 & 103.6 & 19.4 \\
\midrule

\multicolumn{10}{c}{100\% Fine-tuning} \\
ClipCap & - & - & - & 74.0 & 31.5 & 26.8 & 54.7 & 106.6 & 19.9 \\
$C^1$ & \ding{51} & \ding{55} & \ding{55} & 74.6 & 32.4 & 27.2 & 55.2 & 109.7 & 20.3 \\
$C^2_1$ & \ding{51} & \ding{51} & \ding{55} & 74.6 & 32.3 & 27.1 & 55.2 & 109.2 & 20.1 \\ 
$C^2_2$ & \ding{51} & \ding{55} & \ding{51} & 74.1 & 32.2 & 27.1 & 55.2 & 108.8 & 20.2 \\
$C^3$ (Ours) & \ding{51} & \ding{51} & \ding{51} & 74.0 & 32.2 & 27.1 & 55.2 & 108.9 & 20.2 \\ 
\bottomrule
\end{tabular}

\caption{\emph{Image-to-text captioning results in the low data regime.} When paired multi-modal data are limited, our approach that leverages uni-modal data for pre-training leads to substantial improvements compared to the purely supervised method (ClipCap). This is the table version used to reproduce Figure 5 in the main paper. Results are averaged over three random seeds for 1-25\% fine-tuning to reduce the effect of randomness.}
\label{tab:i2t-new}
\end{table*}

We include Appendix Table~\ref{tab:i2t-new}, which is the table version to produce Figure 5 in the main text. In this table, we compare the fully supervised ClipCap~\citep{mokady2021clipcap} to an ablation of components in our method $C^3$, and demonstrate the effectiveness of learning cross-modal tasks with uni-modal data. Results are averaged over three random seeds for 1-25\% fine-tuning to reduce the effect of randomness.

\section{Text-to-Image Generation}
\label{sec:appendix_t2i}

In this section, we provide additional experimental details and qualitative results of text-to-image generation.

\subsection{Experimental Setup} 

\paragraph{Model.} We use the LAFITE~\citep{zhou2022lafite} model for image generation, which uses the CLIP ViT-B/32~\citep{radford2021learning} as the text encoder $f_\gX$, and an adapted version of the unconditional generator from StyleGAN2~\citep{Karras2019stylegan2} as the decoder $g$. LAFITE's generator is adversarially trained alongside a discriminator with an additional contrastive objective to align the generator's representation space to that of CLIP's. As with image captioning, during training, we fix the CLIP encoder to maintain the connection between image and text embeddings, and only update the decoder, which in this case, involves updating both the generator and discriminator.

\paragraph{Data.} Same as image-to-text captioning, we train and evaluate the model on the MS-COCO dataset~\citep{mscoco}. We use the same pre-processed data and data split provided in the LAFITE official code repository, comprised of 82K training images and 40K validation images with 5 captions per image.

\paragraph{Evaluation.} We evaluate our model using the widely-used image generation metric Frechet Inception Distance (FID)~\citep{heusel2017gans}. This metric measure the realism of generated images by computing feature similarity between those of generated images and those of ground-truth images, where the features are derived from the pre-trained Inception-v3 model~\citep{szegedy2016rethinking}. We also report Inception Score (IS)~\citep{salimans2016improved}, which is similar to FID. Similarly to LAFITE, FID/IS scores are computed based on 50K generated images, using captions that are randomly sampled from the validation set.

\paragraph{Setup.} We train the model for at most 750,000 steps with a batch size of 16, a learning rate of 2.5e-3, and Adam~\citation{2015-kingma} optimizer. We initialize our model with the official pre-trained weights from LAFITE~\citep{zhou2022lafite} that was trained on CC3M~\citep{sharma-etal-2018-conceptual}. We report the validation set performance with the lowest FID score. 

\subsection{Qualitative Examples}

\begin{figure*}[htbp]
    \centering
    \includegraphics[width=\linewidth]{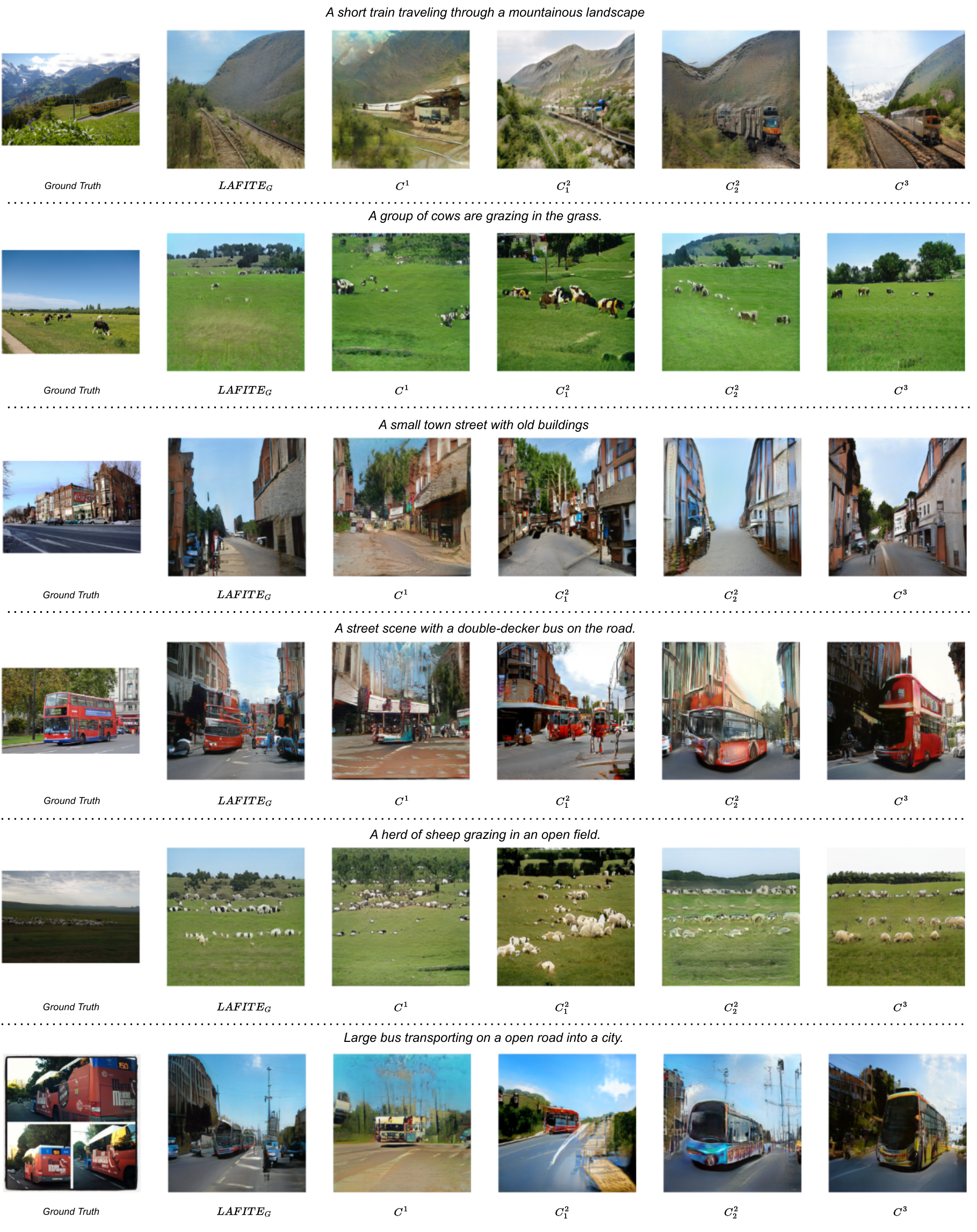}
    \caption{\emph{Qualitative examples of text-to-image generation on the MS-COCO dataset.} $C^1$ generates the basic scene that pertains to the text description, but are less photo-realistic in terms of contrast and color. $C^2_1$ and $C^2_2$ add more fine-grained detail despite still generating some artifacts. $C^3$ generates sharper images that are more detailed with fewer artifacts.}
    \label{fig:t2i}
\end{figure*}

We provide qualitative results for image generation in Figure~\ref{fig:t2i}, which helps us better understand the improvements of each component of $C^3$. We observe that:
\begin{itemize}
    \item $C^1$ generates the \textbf{basic scene} that pertains to the text description, but are less photo-realistic in terms of contrast and color.
    \item $C^2_1$ and $C^2_2$ add more \textbf{fine-grained detail} despite still generating some artifacts.
    \item $C^3$ generates \textbf{sharper images} that are more \textbf{detailed with fewer artifacts}.
\end{itemize}

\section{Why Align Embedding from Different Modalities?}
\label{sec:appendix_align}

If embeddings from different modalities are aligned, we can train a model on one modality and then infer on another modality, enabling us to build cross-modal applications with only uni-modal data. This is an emerging field that lacks principled approaches to be easily applied without requiring more empirical tuning.

We added an experiment to explain this further. We train a text generator (image captioner) over CLIP's text embeddings $x$. During inference, we manually shift all the $x$ to $x+c$ to simulate the modality gap (a constant vector orthogonal to original spans). We report the captioning performance in terms of gap distance $\|c\|$ in Table~\ref{tab:align_embeddings}. We observe substantial performance drops when the gap grows, showing the need to align embeddings.

\begin{table}[htbp]
\small
\centering
\begin{tabular}{c|ccc}
\toprule
\textbf{Gap Distance $\|c\|$} &	\textbf{ROUGE-1} &	\textbf{ROUGE-L}	& \textbf{METEOR} \\
\midrule
0.0	& 85.5	& 81.2	& 83.1 \\
0.2	& 76.3	& 71.5	& 73.8 \\
0.4	& 55.8	& 50.6	& 52.5 \\
0.6	& 40.6	& 35.9	& 36.6 \\
0.8	& 30.5	& 26.7	& 26.5 \\
1.0	& 24.1	& 21.0	& 20.3 \\
1.5	& 16.6	& 14.4	& 13.8 \\
2.0	& 13.8	& 12.1	& 11.6 \\
\bottomrule
\end{tabular}
\caption{\emph{Image captioning performance when trained on embedding $x$ and tested on $x+c$.} This shows the necessity to align embeddings when training a model on one modality and then inferring on another modality.}
\label{tab:align_embeddings}
\end{table}

\section{Effectiveness of Collapse vs Corrupt}
\label{sec:appendix_collapse_vs_corrupt}

In Table~\ref{tab:i2t}, we observe that adding noise (i.e., corrupt, $C_2^2$) is more effective than closing the modality gap (i.e., collapse, $C_2^1$). We hypothesize that the greater effectiveness of $C_2^2$ is because $C_2^2$ has two effects: 1) injecting noise in the span to mitigate alignment noise; 2) injecting noise in the modality gap direction to mitigate the model's sensitivity to this gap.

We have added an experiment to verify this hypothesis. When adding noise sampled from Gaussian distributions, we remove its component in the modality gap direction. Specifically, given $\epsilon \sim \mathcal{N}(0, \sigma^2 I)$, we compute its projection on the gap direction as $\epsilon_g = \frac{\epsilon \cdot g}{\|g\|} \frac{g}{\|g\|}$, where $\frac{g}{\|g\|}$ is the modality gap direction, then we remove this projection to get a new noise $\epsilon' = \epsilon - \epsilon_g$. We add this new noise during training and name this experiment as $C^2_2$ (span noise only).

From Table~\ref{tab:noise-effectiveness}, we see that adding noise only in the span (i.e., $C^2_2$ (span noise only)) makes its performance much worse than adding noise to all the directions (i.e., $C^2_2$), and its performance is similar to removing the modality gap (i.e., $C^2_1$). Therefore, adding noise (i.e., corrupt) actually leads to a similar improvement to removing the modality gap (i.e., collapse). The reason for the greater effectiveness of corrupt than collapse is that injecting Gaussian noise not only adds noise in the span but also to the modality gap direction.

Given the substantial size of the modality gap, adding noise is not enough to fully diminish the gap. Therefore, adding noise and removing the gap (i.e., $C^3$) still enhance the overall performance.

\begin{table*}[htbp]
\centering
\small
\setlength\tabcolsep{2pt}
\begin{tabular}{l|ccc|cccccc}
\toprule
\textbf{Method} & \textbf{Conn.} & \textbf{Coll.} & \textbf{Corr.} & \textbf{BLEU-1}$_\uparrow$  & \textbf{BLEU-4}$_\uparrow$  & \textbf{METEOR}$_\uparrow$  & \textbf{ROUGE-L}$_\uparrow$ & \textbf{CIDEr}$_\uparrow$  & \textbf{SPICE}$_\uparrow$  \\
\midrule
$C^1$ & \ding{51} & \ding{55} & \ding{55} & 28.1 & 2.4 & 12.2 & 25.4 & 13.0 & 6.8 \\
$C^2_1$ & \ding{51} & \ding{51} & \ding{55} & 44.4 & 6.1 & 15.5 & 33.6 & 25.2 & 9.2 \\ 
$C^2_2$ (span noise only) &	\ding{51} & \ding{55} & \ding{51} & 41.2 & 6.2 & 14.9 & 33.6 & 22.8 & 8.3 \\
$C^2_2$ & \ding{51} & \ding{55} & \ding{51} & 69.0 & 25.5 & 24.3 & 50.8 & 87.6 & 17.6 \\
$C^3$ & \ding{51} & \ding{51} & \ding{51} & \textbf{71.0} & \textbf{27.7} & 25.0 & \textbf{52.0} & \textbf{93.3} & \textbf{18.3} \\
\bottomrule
\end{tabular}
\caption{\emph{Collapse is actually as effective as corrupt.} The reason for the greater effectiveness of corrupt than collapse is that injecting Gaussian noise not only adds noise in the span but also to the modality gap direction.} 
\label{tab:noise-effectiveness}
\end{table*}

\section{Dimensional Collapse}
\label{sec:appendix_collapse}

Echoing our main paper, the phenomenon of dimensional collapse~\citep{jing2022understanding} in randomly initialized image and text encoders creates a modality gap prior to optimization that also persists after optimization. In this section, we offer further insights into dimensional collapse and demonstrate that 1) it is an inherent characteristic of deep neural networks and 2) deeper networks experience more pronounced dimensional collapse. Additionally, we establish a connection between dimensional collapse and the cone effect identified by~\citet{liang2022mind}.

\subsection{Real Networks}

In Figure~\ref{fig:svd2}, we find that the vision encoder (vision transformer) and text encoder (transformer) of the CLIP have the dimensional collapse phenomenon at initialization. Specifically, the effective dimension of the embeddings generated by the vision and text encoder is both much smaller than the full dimension, which induces a modality gap. Since the gradients of multi-modal contrastive learning will only be propagated in the effective dimensions, the modality gap will be preserved after optimization.

\begin{figure}[htbp]
    \centering
    \includegraphics[width=0.49\linewidth]{figures/collapse_random.png}
    \includegraphics[width=0.49\linewidth]{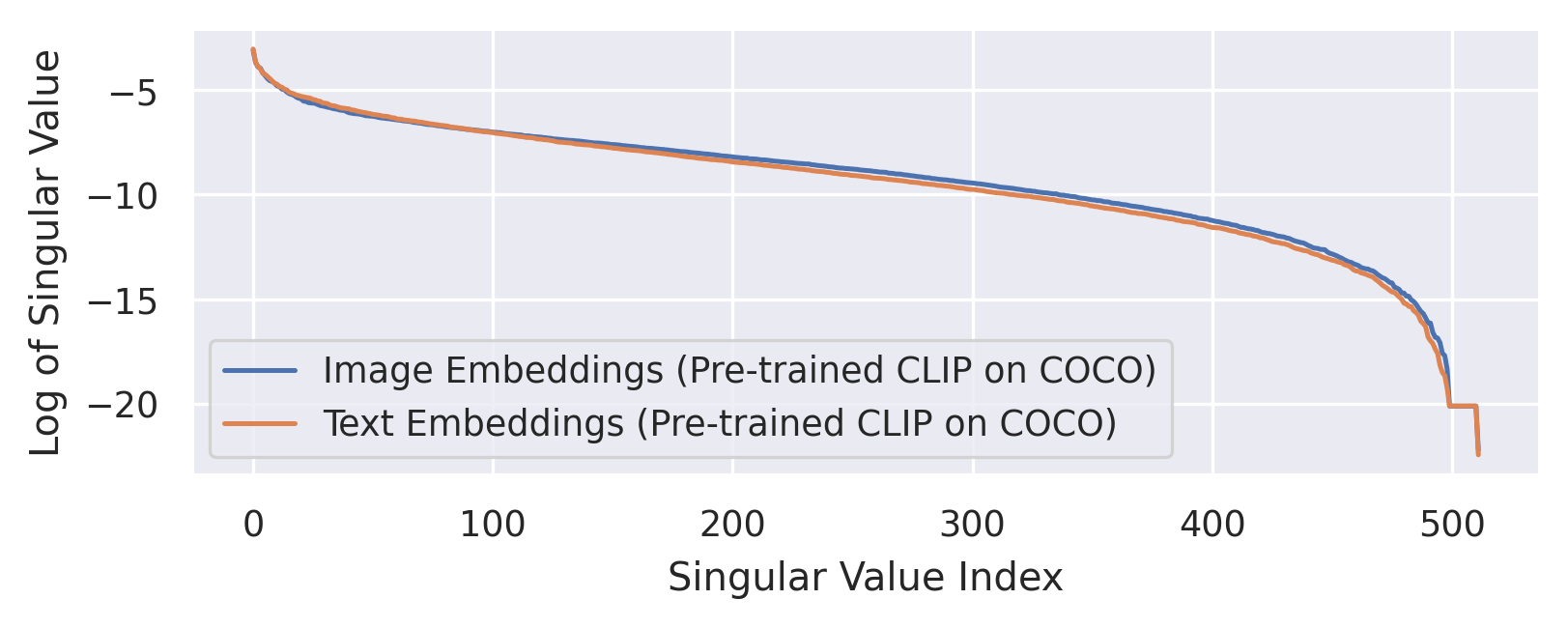}
    \caption{\emph{Dimensional collapse of the randomly initialized (left) and pre-trained (right) CLIP representation space.} Singular values obtained from SVD reveal that the effective dimension of the image and text representation space is much smaller than the total number of dimensions. }
    \label{fig:svd2}
\end{figure}

\subsection{Simulation}

To investigate the dimensional collapse phenomenon, we initialize a simple Multi-Layer Perceptron (MLP) with $n$ blocks, where each block contains a linear layer and Rectified Linear Unit (ReLU) activation. We set the dimensionality of the input space and hidden states to 512, and initialize the weights with Xavier uniform distribution and biases to zero. We initialize $N=1000$ input embeddings, where each dimension is sampled from a standard normal distribution $\mathcal{N}(0,1)$. We feed these embeddings into the MLP and extract the features after every 5 layers. We perform Singular Value Decomposition (SVD) on the feature covariance matrix to quantify the degree of dimensional collapse.

As shown in Figure~\ref{fig:dim_collapse_simulation}, our experiments reveal two key insights:
\begin{enumerate}
\item Dimensional collapse is an inherent characteristic of deep neural networks and even a simple MLP exhibits this behavior.
\item Deeper networks are more prone to dimensional collapse, resulting in a smaller effective dimension of the feature space.
\end{enumerate}

\begin{figure}[htbp]
    \centering
    \includegraphics[width=0.5\linewidth]{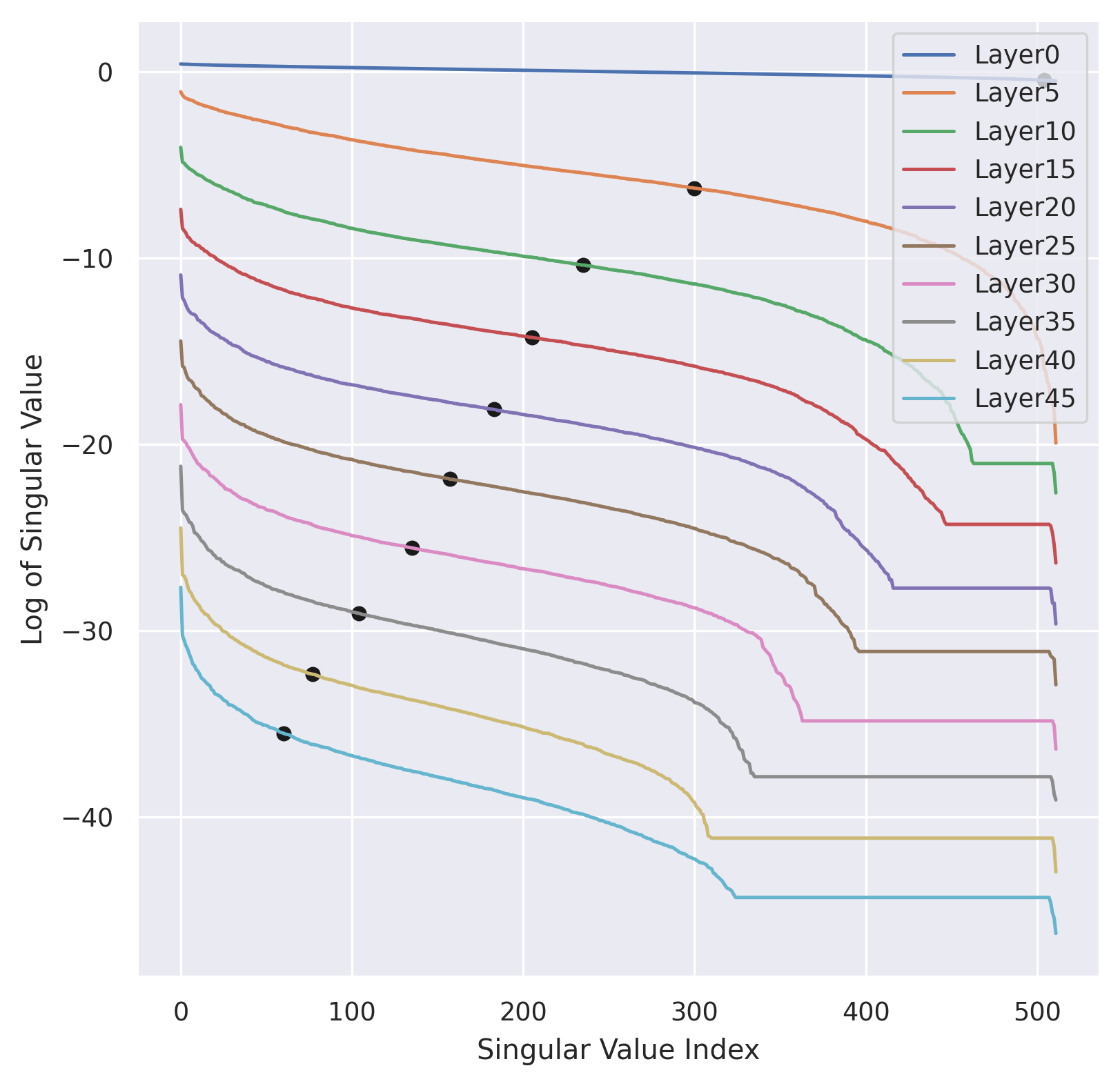}
    \vspace{-1em}
    \caption{\emph{Simulation of dimensional collapse on a MLP ($n$*(ReLU(Linear))) network.} The $x$-coordinates of the black dots indicate the effective dimensions for the embeddings from each of the layers, respectively, which quantifies the extent of collapse. }
    \label{fig:dim_collapse_simulation}
\end{figure}

\subsection{Connection to Cone Effect}

The dimensional collapse phenomenon can provide an explanation for the cone effect observed by~\citet{liang2022mind}. They found that the cosine similarities between any two embeddings outputted by a deep neural network were significantly higher than zero. Due to the dimensional collapse, all embeddings share similar values in the ineffective dimensions, leading to high cosine similarities. The stronger the dimensional collapse, the greater the number of ineffective dimensions, and hence, a stronger cone effect. This explanation is illustrated in Figure~\ref{fig:cone}, where collapsing the $z$-axis of a 3D representation space induces a cone, leading to significantly higher cosine similarities between any two embeddings.

\begin{figure}[htbp]
    \centering
    \includegraphics[width=0.3\linewidth]{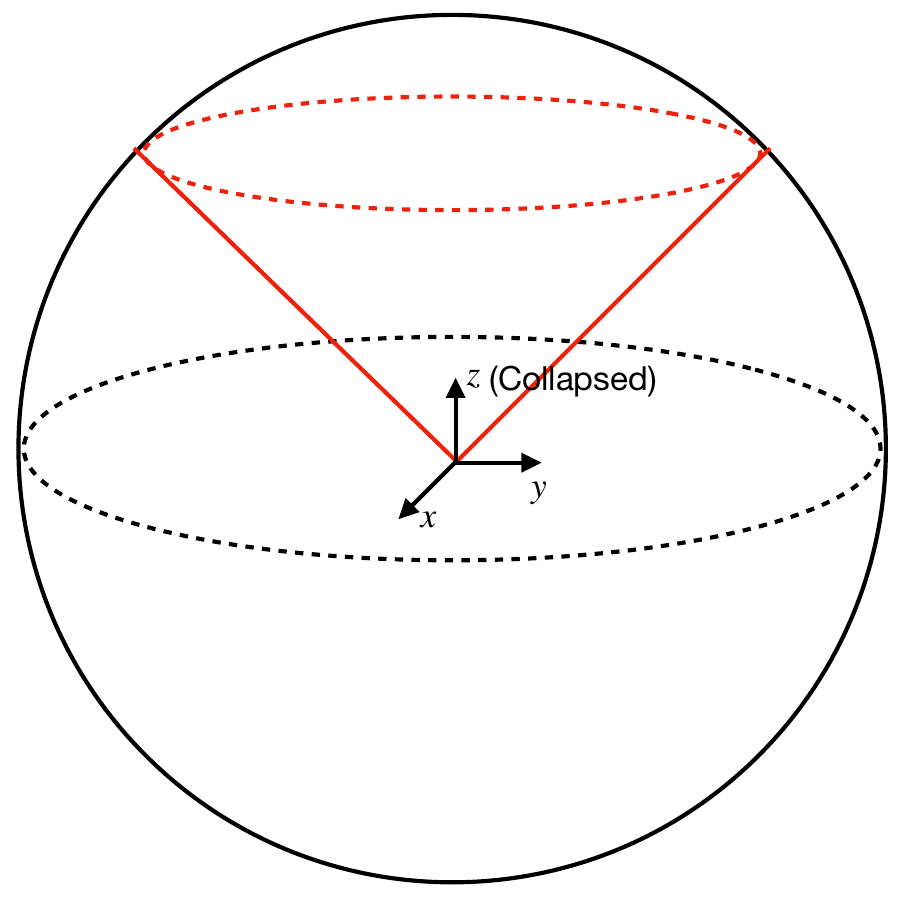}
    \caption{\emph{Dimensional collapse explains the cone effect of deep neural networks.} When the $z$-axis of a 3D representation space is collapsed, it results in a cone shape, where the cosine similarities between any two embeddings are significantly higher than zero.}
    \label{fig:cone}
\end{figure}

\end{document}